\newcommand{\constant}[1]{{\ttfamily #1}}
\newtheorem{proposition}{Proposition}
\newcommand{\StateCommentLabel}[3]{
\STATE #1 \label{#3} \hfill #2
}
\title{\LARGE \bf
HBTP: Heuristic Behavior Tree Planning with 
 Large Language Model  \\ Reasoning
}
\author{Yishuai Cai, Xinglin Chen$^{*}$, Yunxin Mao, Minglong Li$^{*}$, Shaowu Yang, Wenjing Yang and Ji Wang
\thanks{*Corresponding author}
\thanks{This research was supported by the National Natural Science Foundation of China (Grant Nos. 62106278, 62032024). }
\thanks{All the authors are with the College of
Computer Science and Technology, National University of Defense
Technology, Changsha 410073, China.
{ \tt\small \{caiyishuai, chenxinglin, liminglong10\}@nudt.edu.cn}}
}
\begin{document}

\maketitle
\thispagestyle{empty}
\pagestyle{empty}

\begin{abstract}






Behavior Trees (BTs) are increasingly becoming a popular control structure in robotics due to their modularity, reactivity, and robustness. In terms of BT generation methods, BT planning shows promise for generating reliable BTs. However, the scalability of BT planning is often constrained by prolonged planning times in complex scenarios, largely due to a lack of domain knowledge. In contrast, pre-trained Large Language Models (LLMs) have demonstrated task reasoning capabilities across various domains, though the correctness and safety of their planning remain uncertain. This paper proposes integrating BT planning with LLM reasoning, introducing Heuristic Behavior Tree Planning (HBTP)—a reliable and efficient framework for BT generation. The key idea in HBTP is to leverage LLMs for task-specific reasoning to generate a heuristic path, which BT planning can then follow to expand efficiently. We first introduce the heuristic BT expansion process, along with two heuristic variants designed for optimal planning and satisficing planning, respectively. Then, we propose methods to address the inaccuracies of LLM reasoning, including action space pruning and reflective feedback, to further enhance both reasoning accuracy and planning efficiency. Experiments demonstrate the theoretical bounds of HBTP, and results from four datasets confirm its practical effectiveness in everyday service robot applications.




\end{abstract}

\section{INTRODUCTION}

Behavior Trees (BTs) have become a widely adopted control architecture in robotics and embodied intelligence, offering advantages such as modularity, interpretability, and reactivity \cite{colledanchise2018behavior,ogren2022behavior,colledanchise2019learning}. BTs organize and control high-level behaviors using logical tree structures to achieve specific goals. Recently, BT planning has shown promise as an effective method for automatically generating reliable and robust BTs \cite{colledanchise2019blended,cai2021bt,chen2024integrating,cai2025mrbtp}. For example, BT Expansion \cite{cai2021bt} incrementally constructs BTs by exploring the action space, ensuring soundness and completeness in generating goal-directed behaviors. Similarly, \cite{chen2024integrating} introduced a two-stage framework where human instructions are translated into planning goals, followed by the Optimal BT Expansion Algorithm (OBTEA), which not only guarantees goal achievement but also minimizes the total cost of the action sequence. In this process, condition and action nodes can often be expressed using predicate logic, enhancing their clarity and interpretability. 


Nevertheless, a critical limitation of BT planning in real-world applications is the prolonged planning time, which primarily stems from the need to explore all possible actions due to the absence of domain-specific knowledge. As the complexity of the scenario increases, the number of actions and objects grows exponentially, further extending the process. However, we observe that in typical daily scenarios, usually only a small subset of relevant actions is required for a specific task. If this task-relevant subset can be identified in advance and prioritized in the search process, it would substantially reduce unnecessary exploration and lead to a more efficient planning process.


\begin{figure}[t]
\centering
\small
\includegraphics[width=\linewidth]{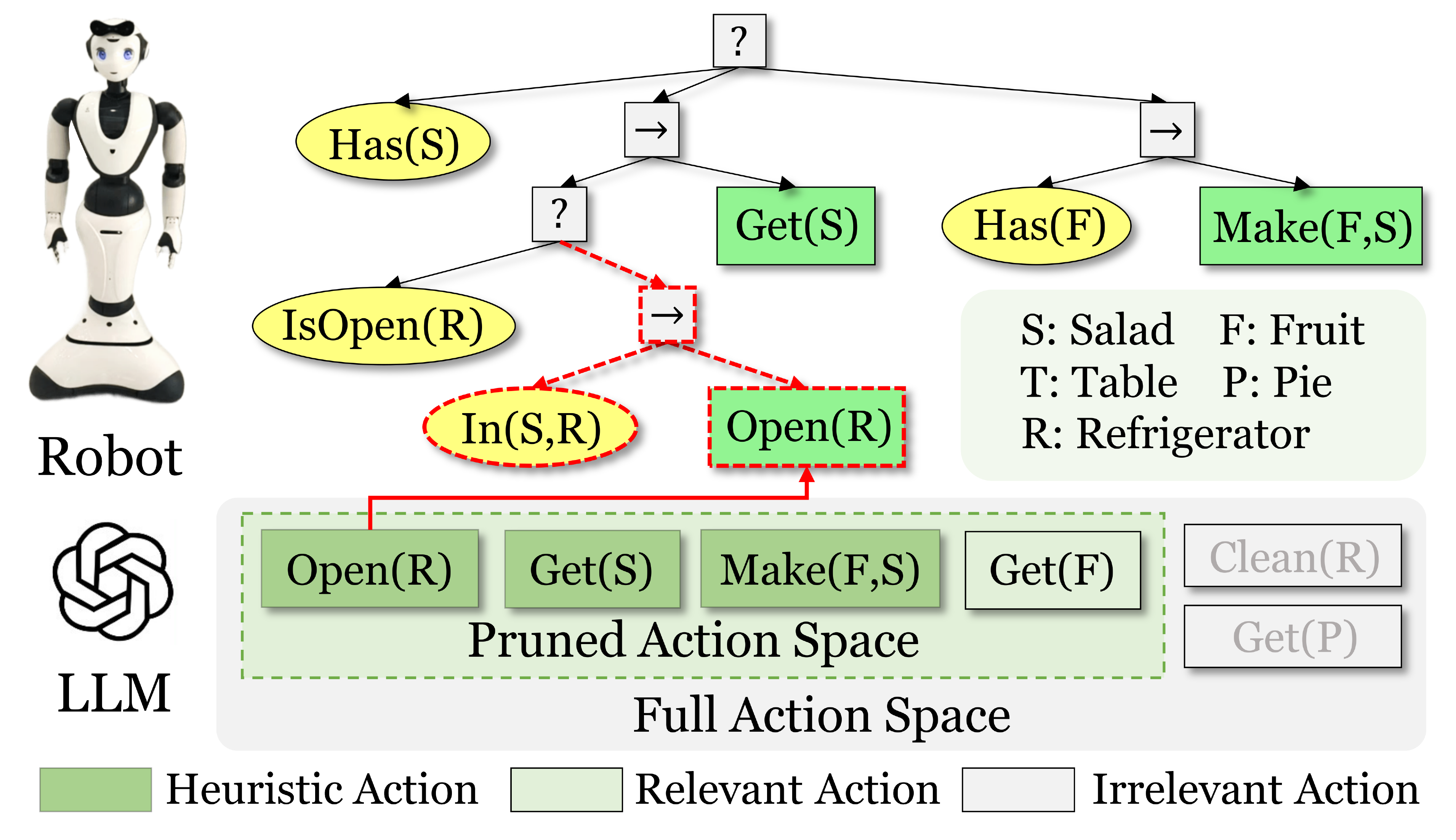}
\caption{The key concept of integrating BT planning with LLM reasoning. Given the task information, the LLM can divide the action space into heuristic actions, relevant actions, and irrelevant actions, effectively guiding heuristic BT planning.}
\label{fig:example}
\end{figure}

\begin{figure*}[t]
\centering
\small
\includegraphics[width=\linewidth]{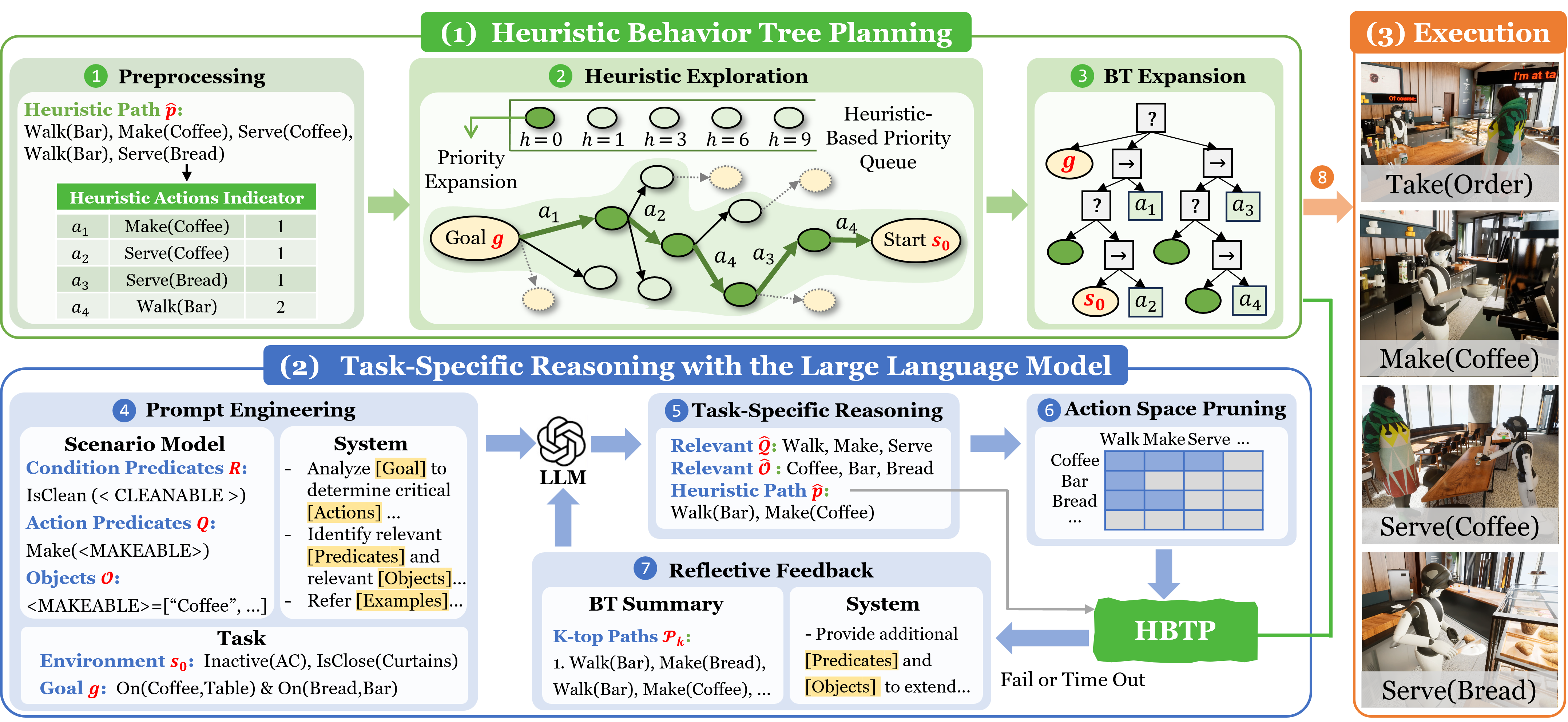}
\caption{An overview of our framework. (1) Before HBTP, a heuristic path reasoned by the LLM is used to construct a heuristic action indicator. 
During HBTP, condition nodes in the BT are ranked and expanded based on their heuristic values through exploration until the initial state is reached, with the BT expanding accordingly.
(2) For LLM reasoning, the scene and task are input to the LLM, which outputs task-relevant predicates and objects to prune the action space. If planning fails or times out, BT summaries are utilized to further refine the heuristics. (3) After HBTP, the produced BT is implemented into the robot to perform the task reactively and robustly.}
\label{fig:framework}
\end{figure*}



In recent years, pre-trained large language models (LLMs) have demonstrated strong task reasoning capabilities across multiple domains \cite{plaat2024reasoningllm,havrilla2024teachingllm,huang-chang-2023-towards,bai2024efficient,chen2024llama}. For instance, \cite{chen2024integrating} employs LLMs to generate task-relevant goals based on human instructions, facilitating efficient BT generation. SayCanPay \cite{hazra2024saycanpay} combines the world knowledge capabilities of LLMs with the systematicity of classical planning to generate practical and efficient plans. LLM-A* \cite{meng2024llm} uses LLM-generated waypoints to enhance pathfinding in large-scale environments.

Despite these advancements, LLMs have not yet been integrated into BT planning.
A major challenge is that LLMs may produce incorrect, redundant, or incomplete action sequences, which risks compromising the correctness and optimality of BTs. While LLMs can accelerate the BT planning process, it is equally critical to maintain the BT's safety, reactivity, and robustness.


To bridge this gap, this paper proposes Heuristic Behavior Tree Planning (HBTP), a reliable and efficient framework that integrates BT planning with LLM reasoning for BT generation. HBTP introduces a heuristic expansion process, offering two variants tailored for optimal and satisficing planning, along with a formal comparison of their efficiency and optimality. By leveraging the LLM for task-specific reasoning to generate a heuristic path, HBTP constructs a heuristic action indicator and implements a heuristic-driven BT expansion strategy, allowing BT planning to follow this path to expand efficiently.


While this heuristic expansion significantly accelerates planning with accurate LLM reasoning or a complete heuristic path, it is still slowed by missing actions. Therefore, we further propose action space pruning and reflective feedback to address the inaccuracies in LLM reasoning. The LLM is leveraged not only to reason about heuristic paths but also to generate task-relevant action predicates and objects, imposing dual constraints on the action space to reduce irrelevant searches and improve efficiency (see Figure \ref{fig:example}). Furthermore, automatic reflective feedback is employed, where the top-$k$ longest planned sequences are fed back to the LLM for refinement, enhancing its reasoning capabilities and improving the framework’s reliability.


Our main contributions are summarized as follows:
\begin{itemize}
\item We introduce HBTP, a reliable and efficient framework for BT generation that integrates BT planning with LLM reasoning. BT planning leverages the heuristic path generated by LLMs for efficient expansion.

\item We address LLM reasoning inaccuracies with action space pruning and reflective feedback to enhance both planning efficiency and reasoning accuracy.

\item We construct four BT planning datasets (\textit{RoboWaiter}\cite{chen2024integrating}, \textit{VirtualHome}\cite{puig2018virtualhome}, \textit{RobotHow-Small}, and \textit{RobotHow}\cite{liao2019synthesizing}) and validate HBTP's effectiveness, demonstrating its applicability to service robots across diverse environments.

\end{itemize}	

\section{Preliminaries}

A BT is a directed rooted tree structure in which the leaf nodes (including condition and action nodes) control the robot's perception and actions, while the internal nodes (including sequence and fallback nodes) handle the logical structuring of these leaves \cite{colledanchise2018behavior} (see Figure \ref{fig:example}). In the context of optimal BT planning \cite{chen2024integrating}, the problem is described as a tuple: \(<\mathcal{S},\mathcal{A}, \mathcal{M}, D, s_0,g>\), where \( \mathcal{S} \) is the finite set of environment states, \( \mathcal{A} \) is the finite set of actions, $\mathcal{M}$ is the action model, $D: \mathcal{A}\mapsto \mathbb{R}^+$ is the cost function,  $s_0$  is the initial state, $g$ is the goal condition. $<s_0,g>$ is the task. Both the state $s\in \mathcal{S}$ and the condition $c$ are represented by a set of literals $\mathcal{L}$, similar to STRIPS-style planning \cite{fikes1971strips}. If $c\subseteq s$, it is said that condition $c$ holds in the state $s$. 

The state transition affected by action $a\in \mathcal{A}$ can be defined as a triplet \( \mathcal{M}(a)=<pre(a),add(a),del(a)> \), comprising the precondition, add effects, and delete effects of the action. After the action's execution, the subsequent state $s'$ will be: $s' = s \cup add(a) \setminus del(a)$. For each BT \( \mathcal{T} \) capable of transitioning $s_0$ to $g$ in a finite time, if the actions required to transition from \( s_0 \) to \( g \) are \( a_1, a_2, ..., a_n \), the optimal BT planning is to find the optimal BT \( \mathcal{T}_* \):
\begin{equation}
\mathcal{T}_* = \mathop{\arg \min}_{\mathcal{T}\in \mathscr{T}} D(\mathcal{T}) =\mathop{\arg \min}_{\mathcal{T}\in\mathscr{T}} \sum_{i=1}^{n} D(a_i)
\end{equation}

In the predicate logic representation, actions and literals are represented using a tuple $<\mathcal{R},\mathcal{Q},\mathcal{O}>$
, where $\mathcal{R}$ is the condition predicate set, $\mathcal{Q}$ is the action predicate set, and $\mathcal{O}$ is the object set. For a predicate $q \in \mathcal{R} \cup \mathcal{Q}$, its domain may be a subset of the Cartesian product of $\mathcal{O}$. In this formulation, a literal can be denoted as $l = r(o_1, ..., o_i), r\in \mathcal{R}$ while an action can be denoted as $a = q(o_1, ..., o_j), q\in \mathcal{Q}$. 

OBTEA \cite{chen2024integrating} is a complete and sound algorithm to find the optimal and robust BT given the goal condition represented in Disjunctive Normal Form (DNF). In this paper, we focus on the goals represented as conjunctions of literals such as \constant{IsClean(Carrot)}$\wedge$\constant{IsIn(Carrot,Fridge)} for simplicity. To deal with the disjunction of goals, one can simply connect sub-goal trees using a fallback node. The complexity of OBTEA is $O(|\mathcal{A}||\mathcal{S}|\log(|\mathcal{S}|))$, where the size of the environment states $|\mathcal{S}|$ can reach $O(|\mathcal{A}|!)$ in the worst case. Massive computational consumption leads to a strong need for more efficient BT planning algorithms to make it practical for real-world applications.

\section{Methodology}
In this section, we first design heuristics and provide a formal analysis of the two heuristic variants (\ref{subsec:oracle_heuristic}). We then present the HBTP algorithm's workflow (\ref{subsec:method_HBTP}), followed by the practical implementation of task-specific reasoning with LLMs, including input/output structures, action space pruning, and reflective feedback (\ref{subsec:method_llm}). An overview of our framework is shown in Figure~\ref{fig:framework}.


\subsection{Heuristics Design and Formal Analysis}
\label{subsec:oracle_heuristic}

\paragraph{Paths in BT Planning} We begin with the definition of paths in BT planning. For a BT found by OBTEA or other BT planning algorithms \cite{cai2021bt}, there usually exist many paths to achieve the goal condition. Expanding all candidate paths explored during search is the main strategy for planned BTs to ensure robustness in execution. We define the set of all finite paths from $s$ to $g$ as $\mathcal{P}(s,g)$. For an optimal BT planning problem, the optimal path $p^* \in \mathcal{P}(s_0,g)$ is an action sequence $p^* = (a_1, a_2, \dots, a_n)$ that minimizes total cost, $D(p^*) \leq D(p), \forall p \in \mathcal{P}(s_0,g)$. In this paper, by \textit{optimal path $p^*$}, we refer to any of the optimal paths from $s_0$ to $g$. If there are multiple optimal paths, any of them can be utilized for task-specific heuristic.

\paragraph{Heuristic Action Indicator} The core idea of leveraging task-specific heuristics with LLMs is to utilize the predicted optimal path as the heuristic path to guide the planning procedure. The algorithm prioritizes actions within this heuristic path. To formalize this, we define the set of all actions in the heuristic path as $\mathcal{A}(p^*)$ and the action indicator function $I: \mathcal{P} \times \mathcal{A} \mapsto \mathbb{N}$. The function $I(p, a)$ denotes the frequency of action $a$ in path $p$, remaining unaffected by any errors in action order. Since OBTEA prioritizes low-cost actions, we reduce the cost of actions in the heuristic path $a \in \mathcal{A}(p^*)$ as a heuristic during planning, provided that for the current search path $p$, $I(p, a) \leq I(p^*, a)$. 

\paragraph{Optimal Heuristics} Intuitively, we can divide the heuristic cost of prioritized actions by a large number $\alpha \gg 1$. If the heuristics path is $\hat{p}$, the total heuristic cost of path $p$ can be defined as:
\begin{align}
h^\alpha(p,\hat{p}) \dot{=} & \sum_{a \in \mathcal{A}(p)} \max \left[ 0, I(p,a) - I(\hat{p},a) \right] D(a) \nonumber \\
& + \frac{1}{\alpha} \sum_{a \in \mathcal{A}(p)} \max \left[ 0, I(\hat{p},a) - I(p,a) \right] D(a)
\end{align}

If the planning algorithm can find the path $p=\mathop{\arg \min}_{p\in\mathcal{P}(s_0,g)} h^\alpha(p,\hat{p})$ and the heuristics path $\hat{p}$ has no non-optimal actions, then this heuristic is optimal.
\begin{proposition} \label{pro:optimality}
Given a task $<s_0,g>$ and a heuristic path $\hat{p}$, if $I(\hat{p},a)\leq I(p^*,a), \forall a\in\mathcal{A}$ and $\alpha \geq 1$, we have $\forall p \in \mathcal{P}(s_0,g), h^\alpha (p^*,\hat{p}) \leq h^\alpha (p,\hat{p})$.
\end{proposition}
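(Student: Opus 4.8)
The plan is to evaluate the heuristic cost of the optimal path $p^*$ in closed form, and then show that every competing path $p \in \mathcal{P}(s_0,g)$ has heuristic cost at least as large, by peeling off the two non-negative contributions in $h^\alpha$ and invoking the optimality of $p^*$. Throughout I would write $D(q) = \sum_{a} I(q,a) D(a)$ for the total cost of any path $q$, which follows directly from the definition of $I$ and the cost function.

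First I would compute $h^\alpha(p^*,\hat{p})$ exactly. By the hypothesis $I(\hat{p},a) \leq I(p^*,a)$ for every action $a$, the quantity $\max[0, I(\hat{p},a) - I(p^*,a)]$ vanishes identically, so the entire discounted second sum is zero. In the first sum, each $\max[0, I(p^*,a) - I(\hat{p},a)]$ simplifies to $I(p^*,a) - I(\hat{p},a)$, giving $h^\alpha(p^*,\hat{p}) = \sum_{a}\bigl(I(p^*,a)-I(\hat{p},a)\bigr)D(a) = D(p^*) - D(\hat{p})$. This is the crucial place where the assumption $I(\hat{p},a)\le I(p^*,a)$ is used: it guarantees the optimal path incurs no ``deficit'' penalty, so its heuristic value collapses to a simple cost difference.

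Second, for an arbitrary $p \in \mathcal{P}(s_0,g)$ I would lower-bound $h^\alpha(p,\hat{p})$. Since $\alpha \geq 1 > 0$ and the discounted second sum has non-negative summands, I may drop it to obtain $h^\alpha(p,\hat{p}) \geq \sum_{a}\max[0, I(p,a)-I(\hat{p},a)]D(a)$. Applying $\max[0,x]\geq x$ termwise (valid since $D(a)>0$) yields $\sum_{a}\bigl(I(p,a)-I(\hat{p},a)\bigr)D(a) = D(p) - D(\hat{p})$. Combining with the optimality bound $D(p)\geq D(p^*)$ then gives $h^\alpha(p,\hat{p}) \geq D(p)-D(\hat{p}) \geq D(p^*)-D(\hat{p}) = h^\alpha(p^*,\hat{p})$, which is exactly the claim.

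The one point that needs care, and the main (if modest) obstacle, is the summation range, since $h^\alpha$ is written as a sum over $\mathcal{A}(p)$ rather than over all of $\mathcal{A}$. I would note that extending the first sum to all actions adds only zero terms, because for $a \notin \mathcal{A}(p)$ the clamped quantity is $\max[0, -I(\hat{p},a)] = 0$. In the lower-bound step with the literal index set $\mathcal{A}(p)$, I must check that subtracting the restricted term does not weaken the inequality, i.e. that $\sum_{a\in\mathcal{A}(p)} I(\hat{p},a)D(a) \leq \sum_{a} I(\hat{p},a)D(a) = D(\hat{p})$, which holds because all omitted summands are non-negative; likewise $\sum_{a\in\mathcal{A}(p)} I(p,a)D(a) = D(p)$ since $I(p,a)=0$ outside $\mathcal{A}(p)$. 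With these bookkeeping checks in place, the argument goes through regardless of how the index set is interpreted.
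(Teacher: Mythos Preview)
Your proof is correct, and it takes a genuinely different route from the paper's. The paper introduces a renaming device: for an arbitrary path $p$, it replaces any ``excess'' copies of actions $a\in\mathcal{A}(\hat p)$ (those beyond $I(\hat p,a)$) by fresh virtual actions with the same model and cost, so that after renaming $I(p,a)\le I(\hat p,a)$ on $\mathcal{A}(\hat p)$. This collapses the $\max$ clamps case-free and lets the paper write the difference exactly as $h^\alpha(p^*,\hat p)-h^\alpha(p,\hat p)=D(p^*)-D(p)+(\tfrac{1}{\alpha}-1)\sum_{a\in\mathcal{A}(\hat p)}[I(p^*,a)-I(p,a)]D(a)$, and then bound each of the three factors by sign. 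You instead evaluate $h^\alpha(p^*,\hat p)=D(p^*)-D(\hat p)$ directly from the hypothesis, discard the nonnegative $1/\alpha$-sum in $h^\alpha(p,\hat p)$, and apply $\max[0,x]\ge x$ termwise to get $h^\alpha(p,\hat p)\ge D(p)-D(\hat p)$; the conclusion follows from $D(p)\ge D(p^*)$. Your argument is shorter and more elementary, avoiding the somewhat artificial renaming construction entirely, and your bookkeeping on the index set $\mathcal{A}(p)$ versus $\mathcal{A}$ is more careful than the paper's. The paper's approach does produce an exact closed-form for the gap $h^\alpha(p^*,\hat p)-h^\alpha(p,\hat p)$, which could in principle be used to quantify how much slack the heuristic has, but the paper does not actually exploit this.
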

\begin{proof}(sketch) For any $p\in \mathcal{P}(s_0,g)$, we can always rename finite actions (keep the same action model and cost) in $p$ to let $I(p,a)\leq 
I(\hat{p},a),\forall a\in \mathcal{A}(\hat{p})$. Without losing generality, we have:
\begin{equation}
    \begin{aligned}
        &h^\alpha(p^*, \hat{p}) - h^\alpha(p, \hat{p}) = D(p^*) - D(p) \\
        &+ \left( \frac{1}{\alpha} - 1 \right) \sum_{a \in \mathcal{A}(\hat{p})} \left[ I(p^*, a) - I(p, a) \right] D(a) \leq 0\\
    \end{aligned} 
    \label{eq:1}
\end{equation}
\end{proof} 
That proves the proposition \ref{pro:optimality}. In the ideal case, we hope to not search for any other action until the search for the heuristic path is finished. To achieve this, $\alpha$ should satisfy $\frac{D(p^*)}{\alpha} < D(a), \forall a\in \mathcal{A}$, which means $\alpha > \frac{D(p^*)}{\min_{a\in \mathcal{A}} D(a)}$. This is a theoretical lower bound on $\alpha$, but in practice we just need to set $\alpha$ to a very large value. We acknowledge the strong assumption of no suboptimal actions in $\hat{p}$, but this heuristic, using $\frac{D(p^*)}{\alpha}$ and retaining original action costs, enables near-optimal BT planning under $\hat{p}$.









\paragraph{Satisficing Heuristics} When \( a \to \infty \), we can get the heuristic function for satisficing planning:
\begin{equation}
h^\infty(p,\hat{p}) = \sum_{a \in \mathcal{A}(p)} \max \left[ 0, I(p,a) - I(p^*,a) \right] D(a) 
\end{equation}
In this case, we treat all the cost of actions included in the heuristic path as $D(a)=0$. Although the relaxation of the optimal ordering of actions may somewhat affect the optimality of BT planning, this heuristic can achieve very fast planning speeds in practical applications.
\begin{proposition} \label{pro:optimality2}
Given a task $<s_0,g>$ and a heuristic path $\hat{p}$, if $I(\hat{p},a) = I(p^*,a), \forall a\in\mathcal{A}$, we have $\forall s \in \mathcal{S}, \forall p \in \mathcal{P}(s,g), h^\infty (p^*,\hat{p}) = 0 \leq h^\infty (p,\hat{p})$.
\end{proposition}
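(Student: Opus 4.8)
The plan is to verify the two claims contained in the statement — that $h^\infty$ is nonnegative and that it vanishes at $p^*$ — directly from the definition, since this proposition is a structural property of the satisficing heuristic rather than something requiring a genuine path-by-path argument.

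First I would establish nonnegativity. For any state $s$ and any path $p \in \mathcal{P}(s,g)$, the quantity $h^\infty(p,\hat{p})$ is a finite sum (paths are finite, so $\mathcal{A}(p)$ is finite) of terms $\max[0,\, I(p,a) - I(p^*,a)]\,D(a)$. Each such term is the product of a factor that is nonnegative by construction of the $\max$ operator and a cost $D(a)$ that is strictly positive since $D:\mathcal{A}\mapsto\mathbb{R}^+$. Hence every summand is $\geq 0$, so $h^\infty(p,\hat{p}) \geq 0$ for every $s\in\mathcal{S}$ and every $p\in\mathcal{P}(s,g)$. This step uses neither the hypothesis nor any special property of $p^*$; it holds for the satisficing heuristic unconditionally.

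Next I would evaluate the heuristic at $p=p^*$. Substituting into the definition gives $h^\infty(p^*,\hat{p}) = \sum_{a\in\mathcal{A}(p^*)} \max[0,\, I(p^*,a)-I(p^*,a)]\,D(a)$, and each summand equals $\max[0,0]\,D(a)=0$, so the total is $0$. This is precisely where the hypothesis $I(\hat{p},a)=I(p^*,a)$ enters: it reconciles the heuristic path $\hat{p}$ appearing as the second argument with the reference multiset $I(p^*,\cdot)$ used inside the summand, ensuring the frequencies cancel term by term. Combining the two observations yields $h^\infty(p^*,\hat{p})=0\leq h^\infty(p,\hat{p})$ for all $s\in\mathcal{S}$ and all $p\in\mathcal{P}(s,g)$, which is the assertion.

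The argument has no real obstacle — it is essentially a definitional check — but the one point that demands care is the exact role of the hypothesis. Nonnegativity and the fact that $p^*$ drives the objective to its global minimum are structural, yet without $I(\hat{p},a)=I(p^*,a)$ one could only conclude that the value attained at $p^*$ is some nonnegative number, not that it is $0$. I would therefore make explicit that this equality is the ``complete and order-free'' condition on the heuristic path under which the satisficing heuristic is simultaneously nonnegative everywhere and exactly zero at the optimum, contrasting it with Proposition~\ref{pro:optimality}, where the weaker inequality $I(\hat{p},a)\le I(p^*,a)$ suffices because the finite-$\alpha$ term absorbs the frequency mismatch.
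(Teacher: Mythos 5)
Your proposal is correct. The paper in fact gives no proof of this proposition — it is stated as immediate from the definition of $h^\infty$ — and your definitional check is exactly that intended argument: every summand $\max\left[0,\, I(p,a)-I(p^*,a)\right]D(a)$ is nonnegative because $D(a)>0$ and the $\max$ is at least $0$ (for any $s$ and any $p\in\mathcal{P}(s,g)$, with no use of the hypothesis), while at $p=p^*$ the hypothesis $I(\hat{p},a)=I(p^*,a)$ makes the frequencies cancel term by term, giving $h^\infty(p^*,\hat{p})=0$; your observation that this hypothesis is precisely what reconciles the second argument $\hat{p}$ with the $I(p^*,\cdot)$ appearing inside the paper's displayed formula (the literal limit of $h^\alpha$ would contain $I(\hat{p},a)$) is also the right reading of why the assumption is needed at all.
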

It means the optimal path $p^*=(a_1,a_2,...,a_n)$ can be identified in only $n$ exploration steps, reducing computational complexity to $O(|\mathcal{A}(p^*)|)$ in the best case. The experiments show that the satisficing heuristic has the lowest planning time with a negligible cost increment of produced BTs.

\subsection{Heuristic Behavior Tree Planning} 
\label{subsec:method_HBTP}



The pseudocode of HBTP is detailed in Algorithm \ref{alg:hobtea}, introducing a heuristic-driven BT expansion strategy based on OBTEA \cite{chen2024integrating}. We predefine the action space $\mathcal{A}$, including the implementation of each action and their action models (\(pre, add, del\)). Action implementations can be learned through methods like reinforcement learning \cite{10801816,Bai_Zhang_Tao_Wu_Wang_Xu_2023,cai2023task2morph}, and action models can be acquired using various existing techniques \cite{arora2018review}. Like OBTEA, HBTP uses backward planning starting from the goal \( g \). Exploration traverses all actions that satisfy condition \( c \) without violating its effects, computes a new condition \( c_{a} \), and adds it to the exploration queue. Expansion incorporates these actions and new conditions into the BT.
\hypersetup{pdfborder={0 0 0}} The key modifications, highlighted in blue, employ the dynamic and path-specific heuristic cost $h(c)$ to determine the expansion priorities of condition node $c$, replacing the reliance on the fixed action cost $D(c)$. We also track the heuristic action indicator $\forall a\in \mathcal{A}, I(c,a)$ for each $c$ (lines \ref{line:initTimes}, \ref{line:updateI}-\ref{line:decreaseI}), which counts how many prior actions are absent in the current path reaching $c$. If $I(c,a)>0$, the heuristic cost is calculated as $h(a)=D(a)/\alpha$ (with $h(a)=0$ in the satisficing heuristic); otherwise, $h(a)=D(a)$ (line \ref{line:heuristicA}).
\hypersetup{pdfborder={0 0 1}}
\begin{proposition} \label{pro:non-optimality}
Given a task $<s_0,g>$ and a heuristic path $\hat{p}$, even if $I(\hat{p},a) = I(p^*,a), \forall a\in\mathcal{A}$, HBTP using satisficing heuristic may produce a BT with an sub-optimal path $p'\in \mathcal{P}(s,g)$, $\exists p\in \mathcal{P}(s,g), h^\infty (p,\hat{p})\leq h^\infty (p',\hat{p})$.
\end{proposition}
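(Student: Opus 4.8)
The plan is to prove this impossibility-style claim by exhibiting an explicit counterexample together with a trace of Algorithm~\ref{alg:hobtea} under the satisficing heuristic. Since the statement is existential (``may produce''), it suffices to construct one task $\langle s_0, g\rangle$, one action space with costs, and one heuristic path taken to be $\hat p = p^*$ (so the hypothesis $I(\hat p, a) = I(p^*, a)$ holds automatically), on which some admissible run of the algorithm commits a sub-optimal branch. I would keep the domain minimal: a conjunctive goal $g$ with independent sub-goals, each achievable by a distinct low-cost action already lying on $p^*$, so that $\mathcal{A}(p^*)$ contains every action needed to reach $g$ from the relevant intermediate states.

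First I would choose the task so that the optimal path from $s_0$ uses all of $\mathcal{A}(p^*)$, while from some intermediate state $s \neq s_0$ a strict subset of those same actions already reaches $g$. This yields two paths $p, p' \in \mathcal{P}(s,g)$ that use only heuristic actions, each within its indicator bound $I(\cdot, a) \leq I(p^*, a)$: the short sequence $p$ that is optimal from $s$, and a sequence $p'$ that additionally performs a heuristic action whose effect already holds in $s$. By the definition of $h^\infty$, any path built solely from in-bound heuristic actions has zero heuristic cost, so $h^\infty(p, \hat p) = h^\infty(p', \hat p) = 0$; this is exactly the witnessing relation $h^\infty(p, \hat p) \leq h^\infty(p', \hat p)$, while $D(p') > D(p)$ makes $p'$ genuinely sub-optimal in cost.

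Next I would trace the backward expansion. The central fact to establish is that, because the satisficing heuristic collapses every heuristic action to cost $0$, the priorities $h(c)$ assigned to the condition nodes lying on the short route and on the redundant route are equal, whereas under OBTEA (or the optimal heuristic of Proposition~\ref{pro:optimality}) the retained action cost $D(c)$ would strictly separate them and force the short route to be finalised first. I would then show that for at least one valid tie-breaking order the condition node corresponding to the longer continuation is popped and expanded before the node yielding the short continuation, so the branch is committed to $p'$; since each condition is expanded only once, the cheaper route is afterwards discarded, and the produced BT executes $p'$ from $s$.

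The main obstacle is this last step: I must arrange the domain so that the sub-optimal branch is \emph{actually} committed at runtime rather than shadowed by a more goal-advanced condition node that the BT checks first. Backward expansion tends to generate the most advanced conditions earliest and to check them with priority, which in fully symmetric domains silently restores optimality; the difficulty is therefore to break this symmetry---using unequal action costs together with a condition reachable by two heuristic routes of different total cost---so that, under the cost-$0$ tie introduced by $h^\infty$, an admissible run of Algorithm~\ref{alg:hobtea} finalises the expensive route. Verifying that the resulting BT indeed executes $p'$ from $s$, while $p$ remains a legal cheaper alternative satisfying $h^\infty(p, \hat p) \leq h^\infty(p', \hat p)$, then completes the argument.
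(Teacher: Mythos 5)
Your proposal identifies the right phenomenon---under $h^\infty$ every path assembled from in-bound heuristic actions collapses to heuristic cost $0$, so the expansion order among the corresponding condition nodes is decided purely by tie-breaking---but it stops exactly where the proof has to happen, and the missing ingredient is the one the paper's argument turns on. The paper does not appeal to ``each condition is expanded only once''; it appeals to the superset-pruning test in line \ref{line:ifExplored} of Algorithm \ref{alg:hobtea}, which refuses to enqueue a new condition $c_a$ whenever $c_a \supseteq c'$ for some already-expanded $c' \in \mathcal{C}_E$. Its scenario is: two heuristic-action routes $p_1,p_2$ lead backward to conditions $c_1,c_2$ with $h^\infty(p_1,\hat p)=h^\infty(p_2,\hat p)=0$ and $c_1\supseteq c_2$, where only the continuation of $c_1$ can reach $p^*$; if tie-breaking expands $c_2$ first, then $c_1$ is pruned \emph{permanently}, so the optimal continuation can never enter the BT. That pruning rule is what makes the wrong commitment irreversible, and it never appears in your proposal; the identity-level deduplication you invoke would not discard the cheaper route at all in a setting with two distinct frontier conditions.

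This omission is not cosmetic, because the construction you sketch fails without it, for precisely the reason you flag as ``the main obstacle.'' If $p'$ is just $p$ with a redundant heuristic action prepended, the condition $c_p$ at the head of the cheap route must be generated and expanded \emph{before} the condition $c_{p'}$ of the redundant route even exists (in your setup $c_{p'}$ is only created by exploring the extra action from $c_p$, i.e., after $c_p$ is popped); hence $c_p$ precedes $c_{p'}$ in the fallback, the BT at state $s$ fires the cheap branch, and optimality is silently restored---exactly the shadowing you describe. Escaping it requires either the paper's mechanism (two genuinely distinct backward routes whose tied frontier conditions satisfy $c_1 \supseteq c_2$, so that line \ref{line:ifExplored} kills the optimal one), or at minimum two parallel non-nested routes plus a verified adversarial tie-breaking trace showing the expensive condition enters the fallback first; your proposal supplies neither, acknowledging the obstacle without overcoming it. As written, therefore, the argument does not yet establish the proposition.
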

\begin{proof} The sub-optimality is due to the pruning of expanded conditions (line \ref{line:ifExplored}). For example, if $p_1,p_2$ leads to $c_1,c_2$ respectively, and $h^\infty (p_1,\hat{p}) = h^\infty (p_2,\hat{p})=0$, but only $p_1$ can finally lead to $p^*$. In this case, if $c_2$ is expanded first and $c_1\supseteq c_2$, the optimal path will be pruned.
\end{proof}
Despite the sub-optimal nature of the satisficing heuristic, we recommend its use over the optimal heuristic for three main reasons: (1) Based on the experimental findings in Section \ref{sec:bt-results}, the additional cost of the satisficing heuristic is minimal; (2) Inaccuracies in LLM reasoning can also result in sub-optimal BT even when the optimal heuristic is employed; (3) The satisficing heuristic can be much more efficient than the optimal one due to the expanded condition pruning, particularly with longer optimal path lengths.

\definecolor{dgreen}{rgb}{0,0.525,0}
\definecolor{blue}{rgb}{0,0,1}
\renewcommand{\algorithmicrequire}{\textbf{Input:}}
\renewcommand{\algorithmicensure}{\textbf{Output:}}   
\renewcommand{\algorithmiccomment}[1]{\hfill\textcolor{dgreen}{\(\triangleright\) #1}}

\renewcommand{\thealgorithm}{\arabic{algorithm}}  

\begin{algorithm}[t]
\small
\caption{HBTP}
\label{alg:hobtea}
\begin{algorithmic}[1]
    \linespread{0.95}\selectfont 
    \REQUIRE BT Planning Problem: $<\mathcal{A},\mathcal{M},D,s_0,g>$, predicted heuristic path  $\hat{p}$
    \ENSURE Final Expansion BT \( \mathcal{T}_* \)
    \StateCommentLabel{$\mathcal{T} \gets Fallback(g)$}{\algorithmiccomment{init BT}}{line:initBT}
    \StateCommentLabel{$h(g)\gets 0$}{\algorithmiccomment{for $\forall c, c\neq g, h(c) \gets +\infty$}}{line:initCost}
    \StateCommentLabel{\textcolor{blue}{$I(g,a)\gets I(p^*,a), \forall a\in\mathcal{A}$}}{\algorithmiccomment{init action indicator for $g$}}{line:initTimes}
    \StateCommentLabel{$\mathcal{C}_U\gets \{g\},\mathcal{C}_E\gets \emptyset$}{\algorithmiccomment{explored and expanded conditions}}{line:cu}
    \WHILE{$\mathcal{C}_U \neq \emptyset$}
    \StateCommentLabel{$c\gets \arg \min_{c\in\mathcal{C}_U} (h(c))$}{\algorithmiccomment{explore and expand $c$}}{line:pickC}
    \FOR{\textbf{each} $a\in A$} \label{line:fora}
    \IF{$(c\cap (pre(a)\cup add(a)\setminus del(a)) \neq \emptyset)$ and $(c\setminus del(a)=c)$} \label{line:forExploration}
    \StateCommentLabel{$c_{a}\gets pre(a) \cup c \setminus add(a)$}{\algorithmiccomment{exploration}}{line:exploration}
    \StateCommentLabel{\textcolor{blue}{$h(a)\gets D(a)/\alpha $ \textbf{if} $I(c,a)>0$ \textbf{else} $D(a)$ }}{\algorithmiccomment{heuristic cost of $a$}}{line:heuristicA}
    \IF{\textcolor{blue}{$c_a \not\supseteq c', \forall c'\in \mathcal{C}_E$ and $h(c) + h(a) < h(c_{a})$}} \label{line:ifExplored}
    \StateCommentLabel{\textcolor{blue}{$h(c_{a})\gets h(c) + h(a)$}}{\algorithmiccomment{update $h(c_a)$}}{line:updateH}
    \StateCommentLabel{\textcolor{blue}{$I(c_{a},a')\gets I(c,a'), \forall a'\in \mathcal{A}$ }}{\algorithmiccomment{update $I(c_{a},\cdot)$}}{line:updateI}
    \StateCommentLabel{\textcolor{blue}{$I(c_{a},a)\gets I(c_{a},a)-1$}}{\algorithmiccomment{decrease $I(c_{a},a)$}}{line:decreaseI}
    \STATE $\mathcal{C}_U\gets \mathcal{C}_U\cup \{c_{a}\}$ \algorithmiccomment{add $c_a$ to $\mathcal{C}_U$}
    \STATE $\mathcal{M}(c_{a})\gets Sequence(c_a,a)$
    \ENDIF
    \ENDIF
    \ENDFOR
    \StateCommentLabel{$\mathcal{C}_U\gets \mathcal{C}_U\setminus \{c\}, \mathcal{C}_E\gets \mathcal{C}_E\cup \{c\}$}{\algorithmiccomment{move $c$ from  $\mathcal{C}_U$ to $\mathcal{C}_E$}}{line:removeC}
    \IF{$c\neq g$}
    \StateCommentLabel{$\mathcal{T} \gets Fallback(\mathcal{T},\mathcal{M}(c))$}{\algorithmiccomment{expansion}}{line:expand}
    \IF{$c \subseteq s_0$}
    \RETURN $\mathcal{T}_*$ \algorithmiccomment{return final BT}
    \ENDIF
    \ENDIF
    \ENDWHILE
\end{algorithmic}
\end{algorithm}

\subsection{Task-Specific Reasoning with Large Language Models} 
\label{subsec:method_llm}


Previous sections showed that heuristic BT planning is efficient with complete or slightly redundant reasoning from LLMs. However, efficiency may deteriorate when certain actions are omitted. This section explores mitigating LLM inaccuracies via action space pruning and reflective feedback to enhance both reasoning accuracy and planning efficiency.

\paragraph{Prompt Engineering} The key to using pre-trained LLMs to carry out task-specific reasoning for BT planning without fine-tuning is prompt engineering \cite{singh2022progprompt,ahn2022can}. Our prompts consist of three components: (1) Scenario. The scenario is defined as a three-tuple $<\mathcal{R},\mathcal{Q},\mathcal{O}>$, as shown in Figure \ref{fig:framework}. The condition predicate set $\mathcal{R}$ is used to indicate the goal understanding. The action predicate set $\mathcal{Q}$ and the object set $\mathcal{O}$ are used to reason the planning procedure. The objects are categorized for a compact representation. (2) 
Few-shot demonstrations. These provide task-specific examples within the prompt to guide the LLM in reasoning without fine-tuning. (3) System. An explanatory text provides context and the prompt for instructing the LLM to produce task-specific reasoning in the correct format. 


\paragraph{Task-Specific Reasoning} In a certain scenario $<\mathcal{R},\mathcal{Q},\mathcal{O}>$, given a task $<s_0,g>$, we want the LLM to reason the planning heuristics: $\hat{\mathcal{Q}},\hat{\mathcal{O}},\hat{p} = LLM(s_0,g)$. $\hat{\mathcal{Q}}$ is the relevant action predicates, $\hat{\mathcal{O}}$ is the relevant objects, and $\hat{p}$ is the heuristic path. The outputs will first be used to prune the action space (as detailed below), followed by heuristic search using  $\hat{p}$ within the pruned space (as detailed in the preceding section). To improve the grammatical accuracy of LLM outputs, we apply reflective feedback similar to \cite{chen2024integrating}. We use a grammar checker to find the syntax and semantic errors, and add them to a blacklist, allowing the LLM to output again and prevent making the same mistakes again.






\paragraph{Action Space Purning} For each task, pruning irrelevant actions from the search space accelerates planning by reducing unnecessary searches and focusing on relevant ones. The action space, defined as the Cartesian product of the action predicate set and the object set, can be pruned by separately reducing the action predicate set and the object set. The action predicate set \( \mathcal{Q}(\hat{p}) \)  and  object set \( \mathcal{O}(\hat{p}) \)  of the heuristic path \( \hat{p} = (a_1, a_2, \dots, a_n) \) are defined as the union of predicates  and objects involved in each action along the path: $\quad \mathcal{Q}(\hat{p}) = \bigcup_{i=1}^n \mathcal{Q}(a_i)  $ and $  \mathcal{O}(\hat{p}) = \bigcup_{i=1}^n \mathcal{O}(a_i) $. Consequently, the pruned action predicate set \( \mathcal{Q}^- \) and object set \( \mathcal{O}^- \) are given by \( \mathcal{Q}^- = \hat{\mathcal{Q}} \cup \mathcal{Q}(\hat{p}) \) and \( \mathcal{O}^- = \hat{\mathcal{O}} \cup \mathcal{O}(\hat{p}) \), respectively. Utilizing additional relevant action predicates and objects to prune the action space, rather than relying solely on those from the heuristic path, mitigates the issue of missing actions. The Cartesian product of the two is defined as $\mathcal{Q}^- \times \mathcal{O}^- = \{ q(o_1,o_2,...) | q \in \mathcal{Q}^-, o_1,o_2,... \in \mathcal{O}^-$. All valid actions in this set form the final pruned action space \( \mathcal{A}^- = (\ \mathcal{Q}^-  \times \mathcal{O}^- ) \cap \mathcal{A} \), where \( \mathcal{A} \) is the full valid action set.



\paragraph{Reflective Feedback} Action space pruning must balance efficiency and completeness to prevent solution failures. Our strategy tolerates action redundancy and triggers automated feedback upon planning failure or timeout. The feedback to LLM consists of two parts: (1) a summarized BT represented by action sequences of the top-\(k\) longest paths \(\mathcal{P}_k = \{ p_1, \dots, p_k \}\), where each \(p_i = \langle a^{(i)}_1, \dots, a^{(i)}_{n_i} \rangle\). It optimizes token efficiency while retaining critical information, exposing decision bottlenecks (e.g., planning halts due to missing actions). (2) the action predicate set and object set not included in the existing pruned action space, denoted as \(\mathcal{Q}' = \mathcal{Q} \setminus \mathcal{Q}^-\) and \(\mathcal{O}' = \mathcal{O} \setminus \mathcal{O}^-\), enabling LLM to expand the pruned space by reintroducing essential elements. Based on this feedback, LLM regenerate task-relevant predicates, objects, and heuristic paths, iteratively expanding the action space and enabling further BT planning, thereby improving the likelihood of success.

\section{Experiments}

Our experiments consists of two parts: (1) Analysis of HBTP's theoretical bounds, focusing on two heuristic methods—HBTP-O (optimal heuristics) and HBTP-S (satisficing heuristics)—as well as action space pruning. (2) Evaluation of LLM reasoning with reflective feedback across four everyday service scenarios. Our code is available at \href{https://github.com/DIDS-EI/LLM-HBTP}{https://github.com/DIDS-EI/LLM-HBTP}.





\subsection{Settings}


\paragraph{Datasets and Scenarios} We constructed four BT planning datasets, each containing 100 data, based on the following four scenarios respectively: \textit{RoboWaiter} \cite{chen2024integrating} 
, \textit{VirtualHome} \cite{puig2018virtualhome}
, \textit{RobtoHow} \cite{liao2019synthesizing} and \textit{RobotHow-Small} (subset of \textit{RobotHow} with fewer objects).
The problem scales of the scenarios are shown in Table \ref{tab:4_scenarios_results}.


\paragraph{Baselines} We main compare our approaches with two baselines: (1) BT Expansion \cite{cai2021bt}: A sound and complete algorithm for BT planning, where the optimality is not guaranteed. (2) OBTEA \cite{chen2024integrating}: An improved algorithm based on BT Expansion for optimal BT planning.


\begin{figure}[h]
\centering
\begin{subfigure}[t]{0.48\linewidth}
    \centering
    \includegraphics[width=\linewidth]{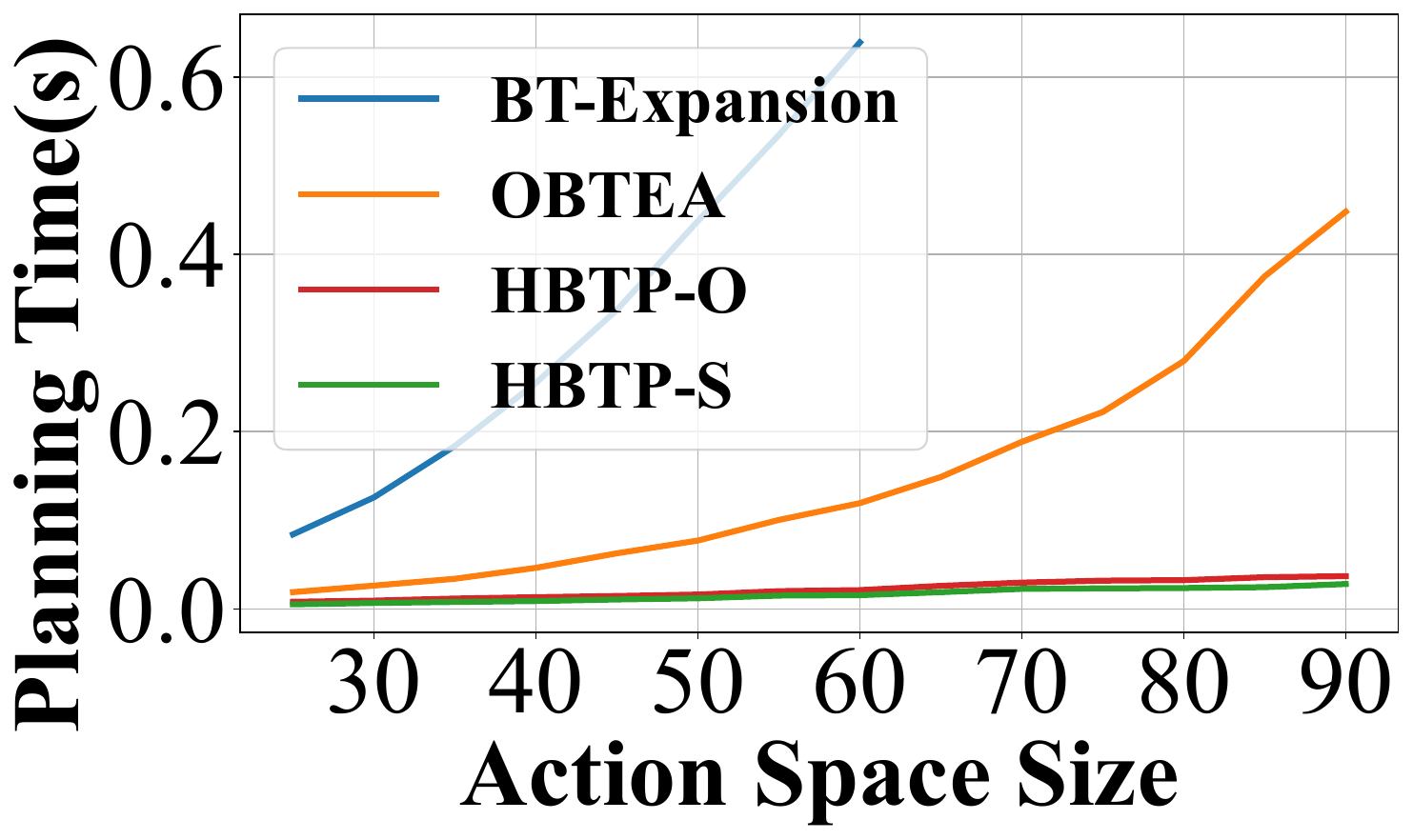}
    \caption{}
    \label{fig:EXP_BT_1_time_vs_size}
\end{subfigure}
\hfill
\begin{subfigure}[t]{0.48\linewidth}
    \centering
    \includegraphics[width=\linewidth]{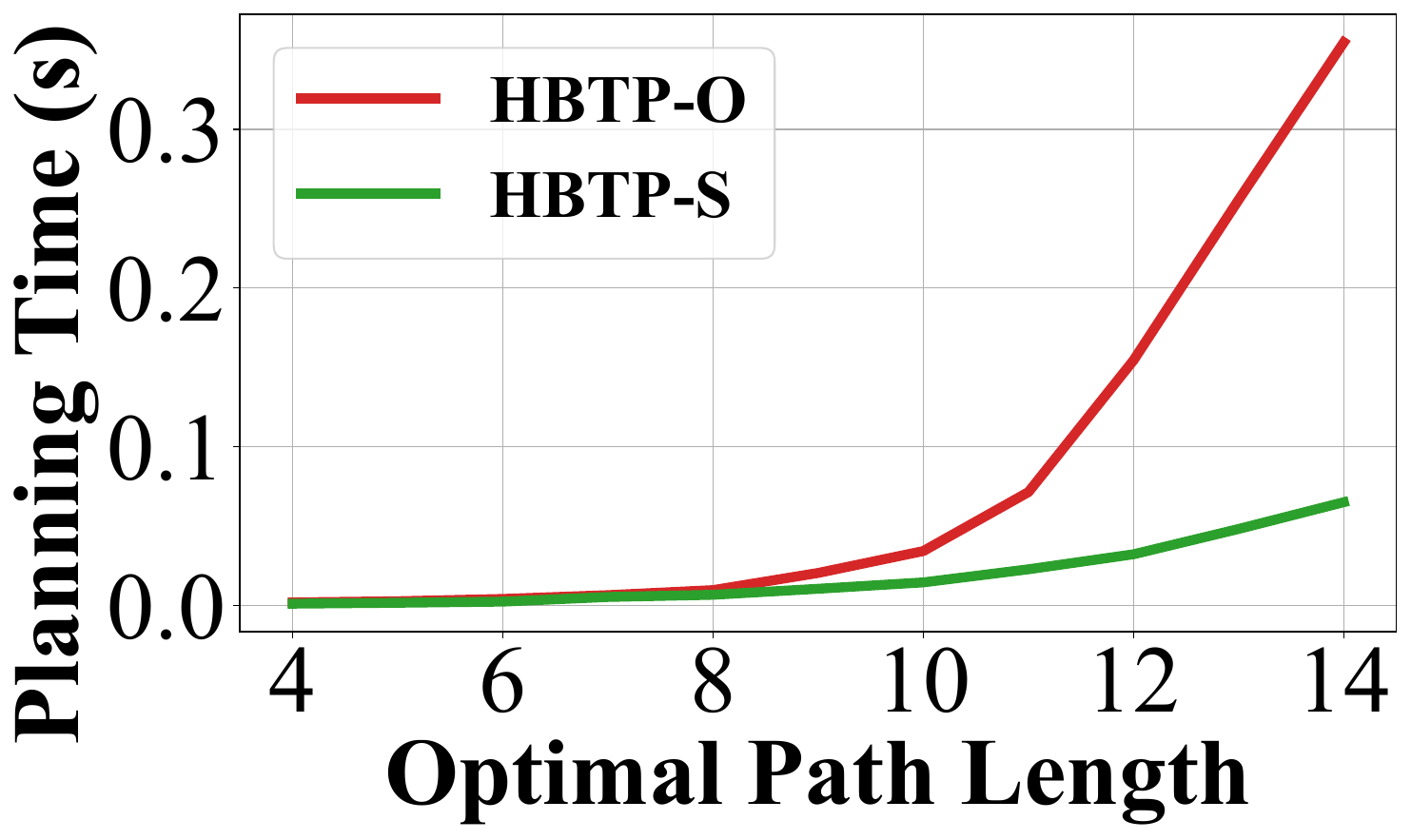}
    \caption{}
    \label{fig:EXP_BT_1_time_vs_act_len}
\end{subfigure}
\caption{Comparison of planning time.} 
\label{fig:EXP_BT_1_time_vs_size_combined}
\end{figure}

\subsection{Theoretical Bounds of HBTP and Action Space Pruning}
\label{sec:bt-results}

\paragraph{Metrics} The metrics used in this section include: (1) Planning time. The total time spent for planning the BT is related to the computing performance of the platform. Our experiments are all performed in the AMD Ryzen 9 5900X 12-Core Processor (3.70 GHz). (2) The number of explored conditions ($|\mathcal{S}^-|$). The explored conditions $\mathcal{S}^-$ during planning is a subset of full state space $\mathcal{S}^-\subseteq \mathcal{S}$. It can also reflect the planning speed but is platform-independent. (3) Total cost of planned BT ($D(\mathcal{T})$). It reflects the optimality of BT planning algorithms. All experiments in this section is performed in the \textit{RobotHow-Small} dataset.

\paragraph{Evaluation of HBTP} As shown in  Figure~\ref{fig:EXP_BT_1_time_vs_size}, the planning time of two baseline algorithms increase rapidly as the action space size $|\mathcal{A}|$ grows. In contrast, the planning time of HBTP-O and HBTP-S both shows a slow increase. Additionally, as depicted in Figure~\ref{fig:EXP_BT_1_time_vs_act_len}, as the length of the optimal path grows, the optimal heuristic spent much more time for planning the optimal BT with those predicted optimal actions. While the HBTP-S maintains lower planning times because of the expended condition pruning strategy.

\paragraph{Evaluation of Action Space Pruning} As shown in Table~\ref{tab:performance_metrics}, in the full action space, all four algorithms may experience planning timeout (>5s). In contrast, in the pruned action space, planning times are notably decreased. Specifically, for HBTP-O and HBTP-S, even though the number of expanded conditions $|\mathcal{S}^-|$ does not decrease significantly, the saved planning time is substantial due to the reduction in actions search for each condition exploration. Another experimental phenomenon is that for the two non-optimal algorithms, BT Expansion and HBTP-S, the total cost of BT planned by them is very close to the optimal cost. This suggests to us that in household service planning tasks, there are few non-optimal paths, so we believe the HBTP-S is more advantageous in these scenarios.



\paragraph{Error Tolerance} We then analyze the error tolerance level of the heuristics. Assuming the optimal path is $p^*$, we randomly decrease the actions in it to reduce the prediction accuracy, and randomly add non-optimal actions to increase the error rate. As shown in Figure~\ref{fig:h0_h1}, as correct rate increases, the expanded conditions $|\mathcal{S}^-|$ decreases for both heuristics. For the optimal heuristic, $|\mathcal{S}^-|$ significantly increases especially with the higher correct (\ref{fig:plot_h1_EN}). For the HBTP-S, the $|\mathcal{S}^-|$ is much more stable when the error rate get higher (\ref{fig:plot_h0_EN}), as well as the $D(\mathcal{T})$ (\ref{fig:plot_h0_cost}). These results again demonstrate the superior of the satisficing heuristic over the optimal one in terms of error tolerance.

\begin{figure*}[t]
    \centering
    \begin{subfigure}[b]{0.31\textwidth}
        \centering
        \includegraphics[width=\textwidth]{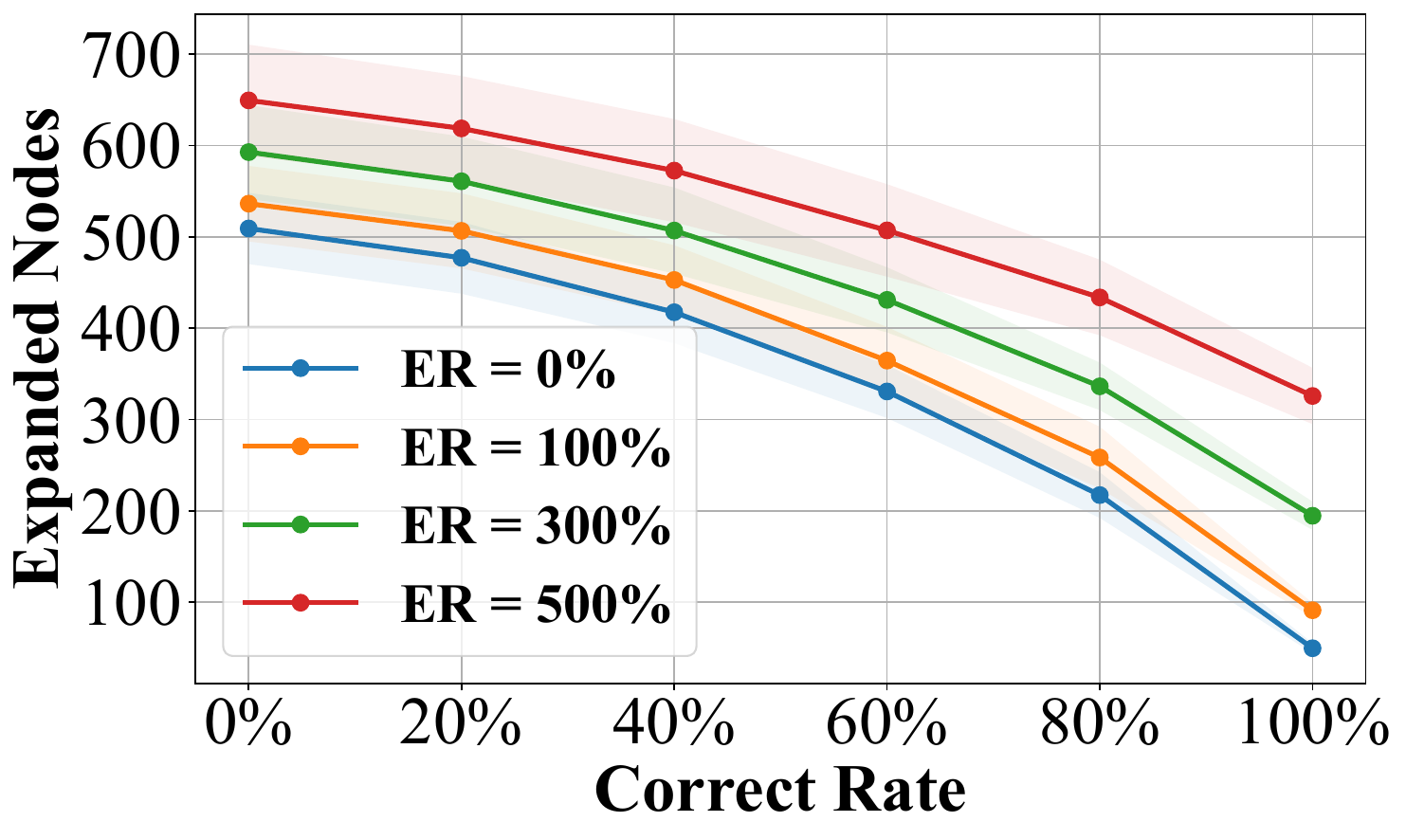}
        \caption{$|\mathcal{S}^-|$ of HBTP-O}
        \label{fig:plot_h1_EN}
    \end{subfigure}
    \hfill
    \begin{subfigure}[b]{0.31\textwidth}
        \centering
        \includegraphics[width=\textwidth]{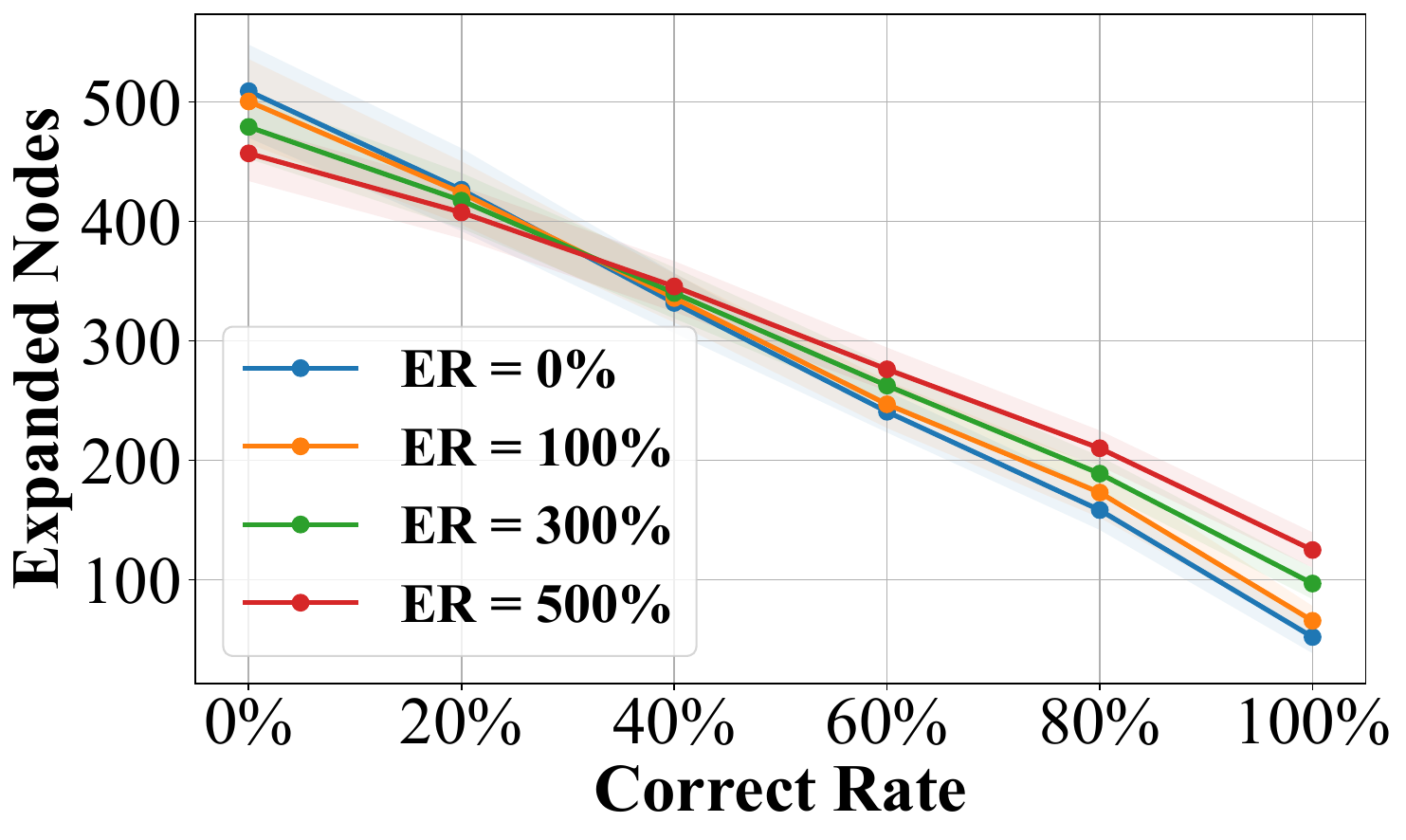}
        \caption{$|\mathcal{S}^-|$ of HBTP-S}
        \label{fig:plot_h0_EN}
    \end{subfigure}
    \hfill
    \begin{subfigure}[b]{0.31\textwidth}
        \centering
        \includegraphics[width=\textwidth]{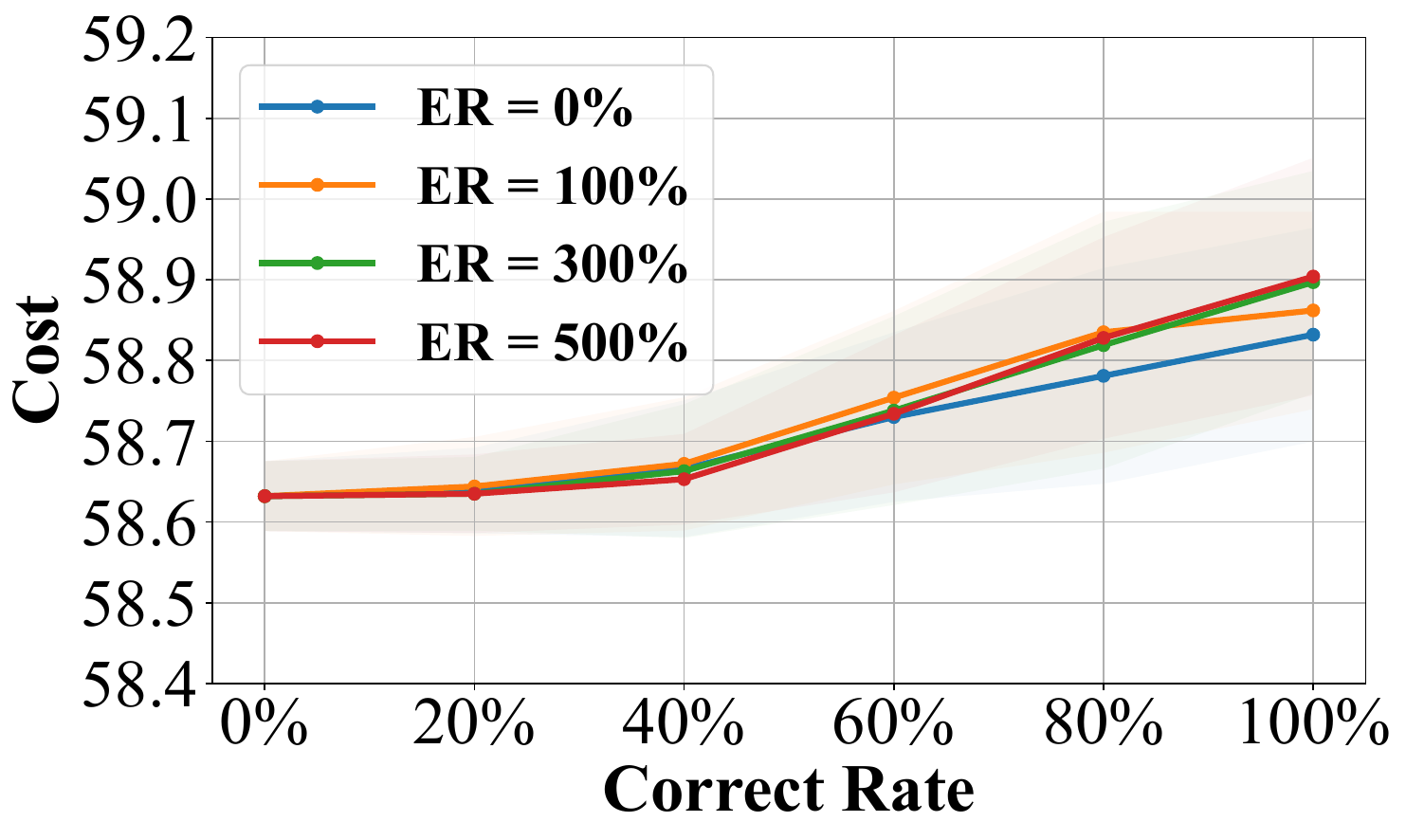}
        \caption{$D(\mathcal{T})$ of HBTP-S}
        \label{fig:plot_h0_cost}
    \end{subfigure}
    \caption{Impact of correct and error rate for two HBTP algorithms with different heuristics. ER stands for error rates.} 
    \label{fig:h0_h1}
\end{figure*}

\begin{table}[t]
\centering
\setlength{\tabcolsep}{2.5pt} 
\caption{Impact of action space pruning on BT planning}
\label{tab:performance_metrics}
\small
    \begin{tabular}{cccccc}
        \toprule
        \small
        \textbf{\makecell{Action \\ Space}} & \textbf{Heuristic} & \textbf{$|\mathcal{S}^-|$} & \textbf{\makecell{Timeout\\ Rate}} & \textbf{\makecell{Planning\\ Time(s)}} & \textbf{$D(\mathcal{T})$} \\
        \midrule
        \multirow{4}{*}{\textbf{Full}} & \textbf{BT Expansion} & 117.8 & 79\% & 4.151 & - \\
        & \textbf{OBTEA} & 405.3 & 82\% & 4.126 & - \\
        & \textbf{HBTP-O} & 27.71 & 9\% & 1.243 & - \\
        & \textbf{HBTP-S} & \textbf{23.87} & \textbf{4\%} & \textbf{0.887} & - \\
        \midrule
        \multirow{4}{*}{\textbf{Pruned }} & \textbf{BT Expansion} & 41.0 & 0\% & 0.016 & 59.86 \\
        & \textbf{OBTEA} & 34.5 & 0\% & 0.006 & \textbf{58.64} \\
        & \textbf{HBTP-O} & 25.43 & 0\% & 0.004 & \textbf{58.64} \\
        & \textbf{HBTP-S} & \textbf{18.17} & 0\% & \textbf{0.003} & 58.83 \\
        \bottomrule
    \end{tabular}
\end{table}

\begin{table*}[t]
    \centering
    \setlength{\tabcolsep}{4.5pt} 
    \caption{Performance of LLM reasoning with HBTP in four scenarios (50 tasks per scenario; averaged over three runs)}
    \label{tab:4_scenarios_results}
    \small
    \begin{tabular}{lccccccccccccc}
        \toprule
        \multirow{3}{*}{\textbf{Scenarios}} & \multicolumn{3}{c}{\multirow{2}{*}{\textbf{Problem Scale}}} & \multicolumn{2}{c}{\textbf{Oracle}} & \multicolumn{8}{c}{\multirow{2}{*}{\textbf{LLM Reasoning Performance}}} \\ 
        & & & & \multicolumn{2}{c}{\textbf{Heuristics}} & & & & & & & \\
        \cmidrule(lr){2-4} \cmidrule(lr){5-6} \cmidrule(lr){7-14}
        & $|\mathcal{A}|$ & \textbf{$|\mathcal{Q}|$} & \textbf{$|\mathcal{O}|$} & $|\mathcal{A}^*|$ & $|p^*|$ & \textbf{NF-SR(\%)} & \textbf{1F-SR(\%)}& \textbf{3F-SR(\%)} & $|\mathcal{A}^-|$ & $|\hat{p}|$  &  $|\mathcal{S}^-|$ & \textbf{\textbf{Time(s)}} & \textbf{B-Time(s)} \\
        \midrule
        \textbf{RoboWaiter} & 174 & 6 & 31 & 2.46 & 2.46 & 92 & 98.67 & 99.33 & 2.54 & 2.46 & 3.99 & \textbf{0.0004} & 0.0008 \\
        \textbf{VirtualHome} & 480 & 11 & 36 & 8.02 & 6.58 & 90.67 & 97.33  & 98 & 7.93 & 6.66 & 28.39 & \textbf{0.0053} & >1m \\
        \textbf{RobotHow-S} & 155 & 16 & 17 & 5.52 & 4.54 & 74 & 80.67 & 84.67 & 8.63 & 4.55 & 13.98 & \textbf{0.0041} & 0.5747 \\
        \textbf{RobotHow} & 7338 & 16 & 126 & 12.3 & 8.04 & 52.67 & 77.33 & 83.33 & 13.97 & 8.19 & 103.16 & \textbf{0.0344} & >1h \\
        \bottomrule
    \end{tabular}
\end{table*}

\subsection{LLM Performance with Reflective Feedback} \label{sec:LLM-results}


\paragraph{Metrics} (1) Success Rate (SR). Proportion of successful BT planning under conditions of no feedback (NF), up to one feedback (1F), and up to three feedbacks (3F). In the feedback scenarios, the top 3 longest paths are selected as BT summaries ($k=3$).
(2) Pruned action space size ($|\mathcal{A}^-|$). It is the final size of the action space when BT planning produces a feasible solution. The action space $\mathcal{A}^-$ may expand based on feedback if the LLM's reasoning is inadequate. All experiments use \textit{gpt-4o-mini-2024-07-18} \cite{openai2022chatgpt} as the LLM and were conducted in March 2025.
\paragraph{Evaluation of Reasoning with LLM} As shown in Table \ref{tab:4_scenarios_results}, we list the problem scale for each scenario, including the number of action predicates \textbf{$|\mathcal{Q}|$}, objects \textbf{$|\mathcal{O}|$}, and full action space size $|\mathcal{A}|$. We also provide oracle heuristics results for all tasks, showing optimal path length $|p^*|$ and the corresponding pruned action space size $|\mathcal{A}^*|$. The results indicate that LLM's reasoning for the pruned action space is accurate, particularly in \textit{RoboWaiter} and \textit{VirtualHome}. Feedback enriches the action space, enhancing SR across all scenarios. The pruned action space $|\mathcal{A}^-|$ and path lengths $|\hat{p}|$ are comparable to the oracle. However, optimal path length does not necessarily imply the smallest action space, as seen in \textit{VirtualHome}. Compared to the baseline OBTEA (B-Times), HBTP significantly reduces planning time, especially in complex scenarios. In the \textit{RobotHow}, OBTEA takes over an hour, while HBTP generates BT in under a second through action space pruning and heuristics.


\section{Related Work}
\paragraph{BT Generation}

Many works have focused on automatically generating BTs to perform tasks, such as evolutionary computing \cite{neupane2019learning,colledanchise2019learning,lim2010evolving}, reinforcement learning \cite{banerjee2018autonomous,pereira2015framework}, imitation learning \cite{french2019learning}, MCTS \cite{scheide2021behavior}, and formal synthesis \cite{li2021reactive,tadewos2022specificationguided,neupane2023designing}. Recently, some work directly generate BT using LLMs \cite{lykov2023llmbrain,lykov2023llmmars}. However, the above methods either require complex environment modeling or cannot guarantee the reliability of the behavior tree. In contrast, BT planning \cite{cai2021bt,chen2024integrating} based on STRIPS-style modeling \cite{fikes1971strips} not only offers intuitive environment modeling but also ensures the reliability and robustness of the generated BTs.




\paragraph{Heuristics for Planning} Beyond traditional search heuristics that rely on domain knowledge \cite{ouessai2020improving, liang2011pruning, gutierrez2021meta, hoffmann2006conformant}, neural networks have recently been employed to learn heuristic functions for planning \cite{chen2024learning, hazra2023saycanpay, papagiannis2022pruning, gutierrez2021meta, ramirez2016heuristics}, though their training is often time-consuming. However, in the context of BT planning, no widely effective heuristic methods have been proposed, to the best of our knowledge.






\paragraph{LLM for Planning} Recent advancements in LLM-based planning include ProgPrompt \cite{singh2022progprompt}, PlanBench \cite{valmeekam2024planbench}, and Voyager \cite{wang2023voyager}. A common approach is to generate action sequences through LLMs \cite{song2023llmplanner,liu2023llm,ahn2022can,chen2023robogpt}. However, plans generated directly by LLMs often lack guarantees of correctness and fall short in terms of responsiveness and robustness during execution. While some methods leverage LLMs for planning heuristics \cite{hazra2023saycanpay,valmeekam2023planning}, they frequently require fine-tuning on specific datasets. In contrast, our task-specific pruning and heuristic approach for BT planning is simpler and more applicable in real-world scenarios.


\section{CONCLUSIONS}
This paper proposes a reliable and efficient BT planning approach for autonomous robots using task-specific reasoning from LLMs. By leveraging heuristic paths from LLMs, along with action space pruning and reflective feedback, HBTP significantly enhances planning efficiency while ensuring BT reliability in complex scenarios. Formal analysis and validation on four datasets confirm its efficacy in real-world service robot applications.

\addtolength{\textheight}{-0cm}
\newpage
\bibliographystyle{IEEEtran}
\bibliography{IEEEabrv,mybib}

 \onecolumn

\appendix

\section{Proof of Proposition \ref{pro:optimality}}

\setcounter{proposition}{0}
\begin{proposition}
	Given a task $<s_0,g>$ and a predicted optimal path $\hat{p}$, if $I(\hat{p},a)\leq I(p^*,a), \forall a\in\mathcal{A}$ and $\alpha \geq 1$, we have $\forall p \in \mathcal{P}(s_0,g), h^\alpha (p^*,\hat{p}) \leq h^\alpha (p,\hat{p})$.
\end{proposition}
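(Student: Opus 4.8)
The plan is to reduce the heuristic cost of an arbitrary path to a transparent closed form and then read off the inequality from the optimality of $p^*$ together with positivity of costs. Throughout I would extend every indicator sum to run over $\mathcal{A}(p)\cup\mathcal{A}(\hat p)$ (equivalently over all of $\mathcal{A}$, setting $I(q,a)=0$ whenever $a\notin\mathcal{A}(q)$), so that $D(q)\doteq\sum_{a}I(q,a)D(a)$ is well defined for $q\in\{p,\hat p,p^*\}$ and the two $\max$-sums in $h^\alpha$ can be compared term by term. This bookkeeping is exactly what the ``renaming of actions'' in the sketch is meant to secure: it prevents an action that occurs in $\hat p$ but is absent from $p$ from being silently dropped.

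The key algebraic step I would use is the elementary identity $\max[0,x-y]-\max[0,y-x]=x-y$. Applying it with $x=I(p,a)$, $y=I(\hat p,a)$ and summing against $D(a)$ rewrites the first sum of $h^\alpha(p,\hat p)$ as $D(p)-D(\hat p)$ plus the summand of the second sum. Writing $M(p)\doteq\sum_{a}\max[0,\,I(\hat p,a)-I(p,a)]\,D(a)\ge 0$, the whole expression collapses to
\begin{equation}
h^\alpha(p,\hat p)=D(p)-D(\hat p)+\Bigl(1+\tfrac{1}{\alpha}\Bigr)M(p),
\label{eq:closed}
\end{equation}
a form in which the cost $D(p)$ and the ``missing-action deficit'' $M(p)$ are cleanly separated.

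I would then specialize \eqref{eq:closed} to $p^*$. The hypothesis $I(\hat p,a)\le I(p^*,a)$ for every $a\in\mathcal{A}$ forces each summand of $M(p^*)$ to vanish, so $M(p^*)=0$ and $h^\alpha(p^*,\hat p)=D(p^*)-D(\hat p)$. Subtracting gives
\begin{equation}
h^\alpha(p^*,\hat p)-h^\alpha(p,\hat p)=\bigl(D(p^*)-D(p)\bigr)-\Bigl(1+\tfrac{1}{\alpha}\Bigr)M(p).
\label{eq:diff}
\end{equation}
Here $D(p^*)-D(p)\le 0$ since $p^*$ is optimal in $\mathcal{P}(s_0,g)$, the coefficient $1+\tfrac{1}{\alpha}$ is positive because $\alpha\ge 1>0$, and $M(p)\ge 0$ because $D(a)\in\mathbb{R}^{+}$; hence the right-hand side of \eqref{eq:diff} is nonpositive, which is the claim.

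The only genuine obstacle is the one flagged above: making the two $\max$-sums telescope requires that $p$ and $\hat p$ be summed over a common index set and that $D(\hat p)$ be read as the formal cost $\sum_{a}I(\hat p,a)D(a)$ rather than the cost of an executable plan, since $\hat p$ need not itself lie in $\mathcal{P}(s_0,g)$. Once that summation convention is fixed, identity \eqref{eq:closed} is mechanical and the rest is immediate: no appeal to the internal structure of the paths, the state space, or the action model is needed beyond optimality of $p^*$ and strict positivity of the cost function.
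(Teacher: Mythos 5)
Your proof is correct, but it is a genuinely different argument from the one in the paper's appendix. The paper never forms your telescoping identity; instead it normalizes $p$ by a ``virtual path'' construction: every occurrence of an action $a\in\mathcal{A}(\hat p)$ beyond the count $I(\hat p,a)$ is renamed to a fresh action with the same model and cost, so that without loss of generality $I(p,a)\le I(\hat p,a)$ on $\mathcal{A}(\hat p)$. It then collapses the heuristic to $h^\alpha(p,\hat p)=D(p)+(\tfrac{1}{\alpha}-1)\sum_{a\in\mathcal{A}(\hat p)}I(p,a)D(a)$ and finishes with three sign facts: $D(p^*)\le D(p)$, $\tfrac{1}{\alpha}-1\le 0$, and $I(p^*,a)\ge I(\hat p,a)\ge I(p,a)$. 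Note that this collapsed form reads the $\tfrac{1}{\alpha}$-sum as a \emph{discount on the occurrences of $p$ matched by $\hat p$} (which is what Algorithm~\ref{alg:hobtea} actually computes in its heuristic-cost line), not as Equation~(2)'s penalty $\max[0,I(\hat p,a)-I(p,a)]$ on occurrences of $\hat p$ missing from $p$; yours is the proof that takes Equation~(2) at face value. The difference is substantive: in your closed form the correction $M(p)$ carries the \emph{positive} coefficient $1+\tfrac{1}{\alpha}$, so you never use the hypothesis $\alpha\ge 1$ (any $\alpha>0$ suffices), whereas in the paper's form the correction carries the \emph{negative} coefficient $\tfrac{1}{\alpha}-1$ and $\alpha\ge 1$ is essential. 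Your route also avoids the renaming step entirely --- a step whose soundness (preservation of the value of $h^\alpha$) the paper asserts but never verifies, and which it then applies somewhat loosely to $p^*$ itself, even though $p^*$ satisfies the reverse of the normalization under which the collapsed formula was derived.

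Two small repairs. First, extending the second sum from $a\in\mathcal{A}(p)$ to all of $\mathcal{A}$ is not pure bookkeeping: for an action of $\hat p$ absent from $p$, the literal summand is dropped, while your convention adds $\tfrac{1}{\alpha}I(\hat p,a)D(a)>0$; so your identity computes an upper bound of the literal $h^\alpha(p,\hat p)$, and an inequality against an inflated right-hand side does not formally imply the literal claim. The fix is one line: the first sum is unchanged by the extension and alone dominates $D(p)-D(\hat p)$, the literal second sum is nonnegative, and $h^\alpha(p^*,\hat p)=D(p^*)-D(\hat p)$ under either convention because the hypothesis forces $\mathcal{A}(\hat p)\subseteq\mathcal{A}(p^*)$; hence $h^\alpha(p,\hat p)\ge D(p)-D(\hat p)\ge D(p^*)-D(\hat p)=h^\alpha(p^*,\hat p)$. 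Second, your remark that this extension is ``exactly what the renaming secures'' is backwards: the paper's renaming caps the excess $I(p,a)>I(\hat p,a)$; it does nothing about actions of $\hat p$ that are missing from $p$.
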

\begin{proof}
Because the $p^*$ is the optimal path, we have $\forall p \in \mathcal{P}(s_0,g), D(p^*) \leq D(p)$. In order to simplify the expression, we introduce a virtual path $p'$, where $\forall a \in \mathcal{A}(\hat{p})$, if $I(p,a)>I(\hat{p},a)$, we replace the number of $I(p,a)-I(\hat{p},a)$ actions with virtual actions $a_1,a_2,...,a_k, k=I(p,a)-I(\hat{p},a)$. These actions have the same action model and cost function as $a$, but just in different names. Therefore, we have $\forall a \in \mathcal{A}(\hat{p}), I(p,a)\leq I(\hat{p},a)$, so for any $p\in \mathcal{P}(s_0,g)$:
    \begin{align}
        h^\alpha(p,\hat{p}) &= \sum_{a\in \mathcal{A}-\mathcal{A}(\hat{p})} I(p,a)D(a) + \frac{1}{\alpha} \sum_{a\in \mathcal{A}(\hat{p})} I(p,a)D(a) \label{eq:1} \\
        &= \sum_{a\in \mathcal{A}} I(p,a)D(a) - \sum_{a\in \mathcal{A}(\hat{p})} I(p,a)D(a) + \frac{1}{\alpha} \sum_{a\in \mathcal{A}(\hat{p})} I(p,a)D(a) \\
        &= D(p) - \sum_{a\in \mathcal{A}(\hat{p})} I(p,a)D(a) + \frac{1}{\alpha} \sum_{a\in \mathcal{A}(\hat{p})} I(p,a)D(a) \\
        &= D(p) + (\frac{1}{\alpha} - 1) \sum_{a\in \mathcal{A}(\hat{p})} I(p,a)D(a)\\
    \end{align}
Then, we have: \begin{align}
        h^\alpha(p^*,\hat{p}) - h^\alpha(p,\hat{p}) &= D(p^*)-D(p) + (\frac{1}{\alpha} - 1) \sum_{a\in \mathcal{A}(\hat{p})} \left[ I(p^*,a) - I(p,a) \right] D(a)  \label{eq:1}
    \end{align}
As we know:
\begin{itemize}
    \item $D(p^*)-D(p)\leq 0$
    \item $\alpha \geq 1$ so $\frac{1}{\alpha} - 1 \leq 0$
    \item $\forall a\in \mathcal{A}(\hat{p}), I(p^*,a) \geq  I(\hat{p},a) \geq I(p,a)$ and $D(a)\geq 0$, \\ so $\sum_{a\in \mathcal{A}(\hat{p})} \left[ I(p^*,a) - I(p,a) \right] D(a)\geq 0$
\end{itemize}
Therefore, we have $h^\alpha (p^*,\hat{p}) \leq h^\alpha (p,\hat{p})$, which proofs the proposition.
\end{proof}


	





\section{Datasets and Environments}
\subsection{Four Scenarios}
 In this section, we describe four different scenarios used for testing and validating our approaches. These scenarios provide a variety of environments and tasks that range from simple to complex, helping to evaluate the performance and adaptability of the models.

\begin{itemize} %
\item \textbf{RoboWaiter:} A digitally twinned café environment simulator, where a humanoid robot acts as a waiter equipped with 21 active joints for intricate movement control (see Figure~\ref{fig:robowaiter}). 
\item \textbf{VirtualHome:} A household activities simulation platform designed to model complex interactions within a home setting, such as cooking, cleaning, and other domestic tasks (see Figure~\ref{fig:virtualhome}).
\item \textbf{RobotHow-Small:} A simplified version of the RobotHow dataset, focusing on basic daily tasks, making it easier to test and validate BT planning algorithms.
\item \textbf{RobotHow:} A comprehensive and complex dataset encompassing a wide range of everyday activities and challenging tasks, aimed at pushing the boundaries of robotic task planning and execution(see Figure~\ref{fig:robothow}).
\end{itemize}  

The table \ref{table:action_predicates} summarizes the action predicates available for different scenes in various robotics and virtual environments. Each scene supports a unique set of actions that define the capabilities and interactions possible within that environment. The RoboWaiter scene focuses on tasks related to service and movement, while the VirtualHome scene includes actions pertinent to a home setting. The RobotHow-Small and RobotHow scenes encompass a broader range of actions, indicating their suitability for more complex or varied tasks.

\begin{table*}[h]
	\centering
        \normalsize
	\caption{Action predicates for different scenarios}
	\label{table:action_predicates}
	\begin{tabular}{>{\bfseries}c c}
		\toprule
		Scenarios & \textbf{Included Action Predicates} \\
		\midrule
		\textbf{RoboWaiter} & \constant{Clean}, \constant{Make}, \constant{MoveTo}, \constant{PickUp}, \constant{PutDown}, \constant{Turn} \\
		\midrule
		\multirow{2}{*}{\textbf{VirtualHome}} & \constant{Close}, \constant{LeftGrab}, \constant{LeftPut}, \constant{LeftPutIn}, \constant{Open}, \constant{RightGrab}, \\
		& \constant{RightPut}, \constant{RightPutIn}, \constant{SwitchOff}, \constant{SwitchOn}, \constant{Walk} \\
		\midrule
		\multirow{3}{*}{\textbf{RobotHow-Small}} & \constant{Walk}, \constant{LeftGrab}, \constant{RightGrab}, \constant{LeftPut}, \constant{RightPut},  \\
		& \constant{Open}, \constant{Close}, \constant{LeftPutIn}, \constant{RightPutIn}, \\
        \midrule
		\multirow{2}{*}{\textbf{RobotHow}} & \constant{SwitchOff}, \constant{SwitchOn}, \constant{PlugIn}, \constant{PlugOut}, \\
		& \constant{Cut}, \constant{Wash}, \constant{Wipe} \\
		\bottomrule
	\end{tabular}
\end{table*}

\begin{figure*}[t]
    \centering
    \begin{subfigure}[b]{0.31\textwidth}
        \centering
        \includegraphics[width=\textwidth, height=0.3\textheight, keepaspectratio]{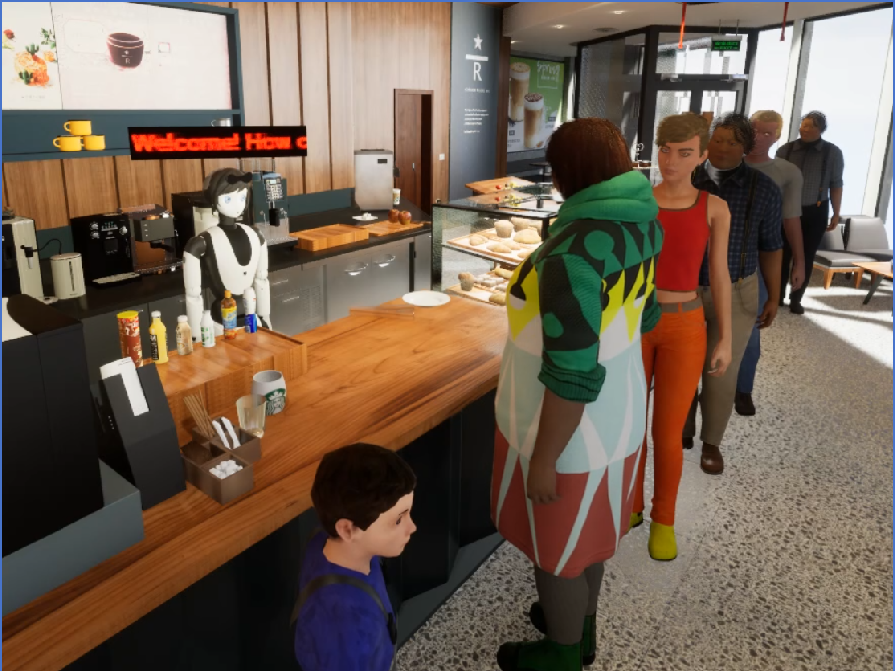}
        \caption{RoboWaiter}
        \label{fig:robowaiter}
    \end{subfigure}
    \hfill
    \begin{subfigure}[b]{0.31\textwidth}
        \centering
        \includegraphics[width=\textwidth, height=0.3\textheight, keepaspectratio]{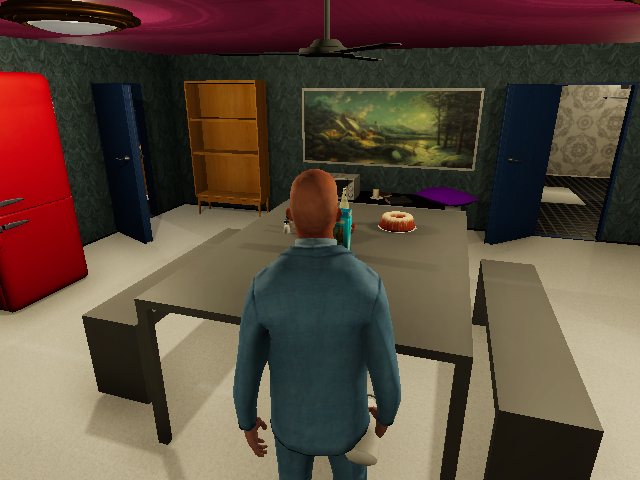}
        \caption{VirtualHome}
        \label{fig:virtualhome}
    \end{subfigure}
    \hfill
    \begin{subfigure}[b]{0.31\textwidth}
        \centering
        \includegraphics[width=\textwidth, height=0.6\textheight, keepaspectratio]{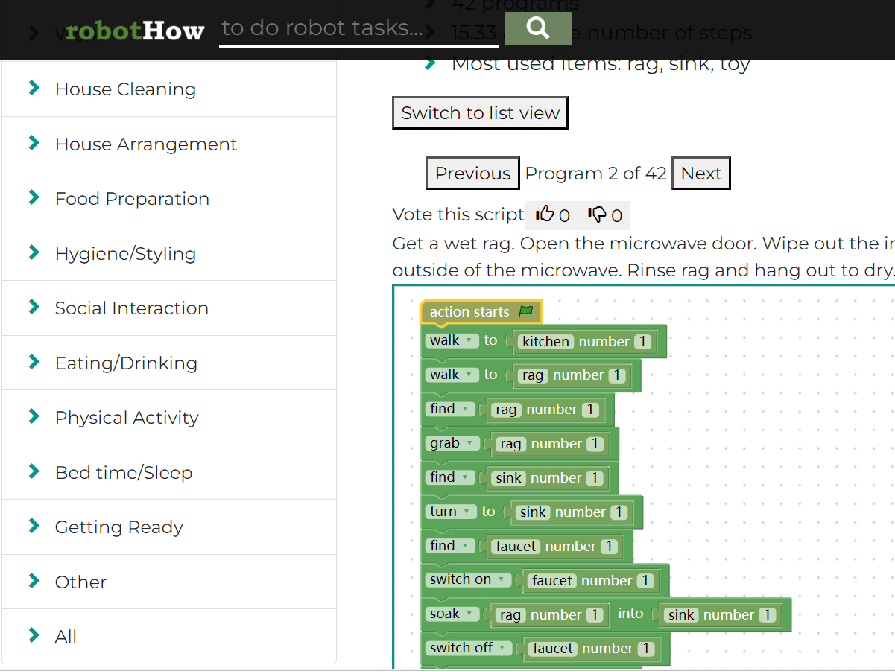}
        \caption{RobotHow}
        \label{fig:robothow}
    \end{subfigure}
    \caption{Illustrations of scenarios}
    \label{fig:scenarios}
\end{figure*}

\subsection{Datasets}

The table above (Table \ref{table:performance_metrics}) illustrates the average optimal path length and the average inclusion of specific action predicates across different datasets. Each dataset contains 50 records and is categorized as Easy, Medium, or Hard, demonstrating varying levels of complexity. In Figure~\ref{fig:bt}, we showcase instances of behavior trees derived from common-sense reasoning employing LLM and HBTP across three distinct difficulty levels of datasets.

\begin{itemize}
	\item \textbf{Easy:} The Easy dataset consists solely of single-task data, such as \constant{IsOn\_apple\_table} and \constant{IsOpen\_microwave}.
	
	\item \textbf{Medium:} The Medium dataset includes both single-task and dual-task data, and features tasks that require tools not specified in the goal. For instance, \constant{IsClean\_sofa} necessitates the use of a \constant{rag}, \constant{IsClean\_apple} requires moving to a faucet and switching it on, and \constant{IsCut\_breadslice} involves using a \constant{kitchenknife}. Additionally, actions like \constant{IsSwitchedOn} require plugging in appliances if not already plugged in, and tasks become unmanageable when both hands are occupied. These complexities present significant challenges to models' commonsense reasoning abilities.
	
	\item \textbf{Hard:} The Hard dataset builds on the Medium dataset by incorporating triple-task data, thus increasing the length of the optimal action sequence and testing the model's ability to solve long-sequence problems. For instance, the task \constant{IsClean\_peach \& IsClean\_desk \& IsOn\_peach\_desk} requires the following optimal sequence of actions: \constant{Walk\_peach, RightGrab\_peach, Walk\_faucet, SwitchOn\_faucet, Wash\_peach, Walk\_rag, LeftGrab\_rag, Walk\_desk, RightPut\_peach\_desk, Wipe\_desk}. This progression significantly challenges the model's capability to handle extended action sequences efficiently.
\end{itemize}

\begin{table*}[h]
	\centering
        \normalsize
	\caption{Average optimal path length and action predicates for different datasets}
	\label{table:performance_metrics}
	\begin{tabular}{>{\bfseries}l ccccccccccc}
		\toprule
		\textbf{} & \textbf{Record Count} & \textbf{$|p^*|$} & \textbf{PlugIn} & \textbf{Open} & \textbf{Close} & \textbf{Cut} & \textbf{Wipe} & \textbf{Wash} \\
		\midrule
		\textbf{Easy} & 50 & 3.1 & 0.3 & 0.3 & 0.2 & 0 & 0 & 0 \\
		\textbf{Medium} & 50 & 4.5 & 0.3 & 0.3 & 0.05 & 0.15 & 0.1 & 0.1 \\
		\textbf{Hard} & 50 & 8.1 & 0.2 & 0.2 & 0.2 & 0.3 & 0.3 & 0.3 \\
		\bottomrule
	\end{tabular}
\end{table*}

\begin{figure*}[t]
	\centering
	\begin{subfigure}[t]{0.35\textwidth}
		\includegraphics[width=\textwidth]{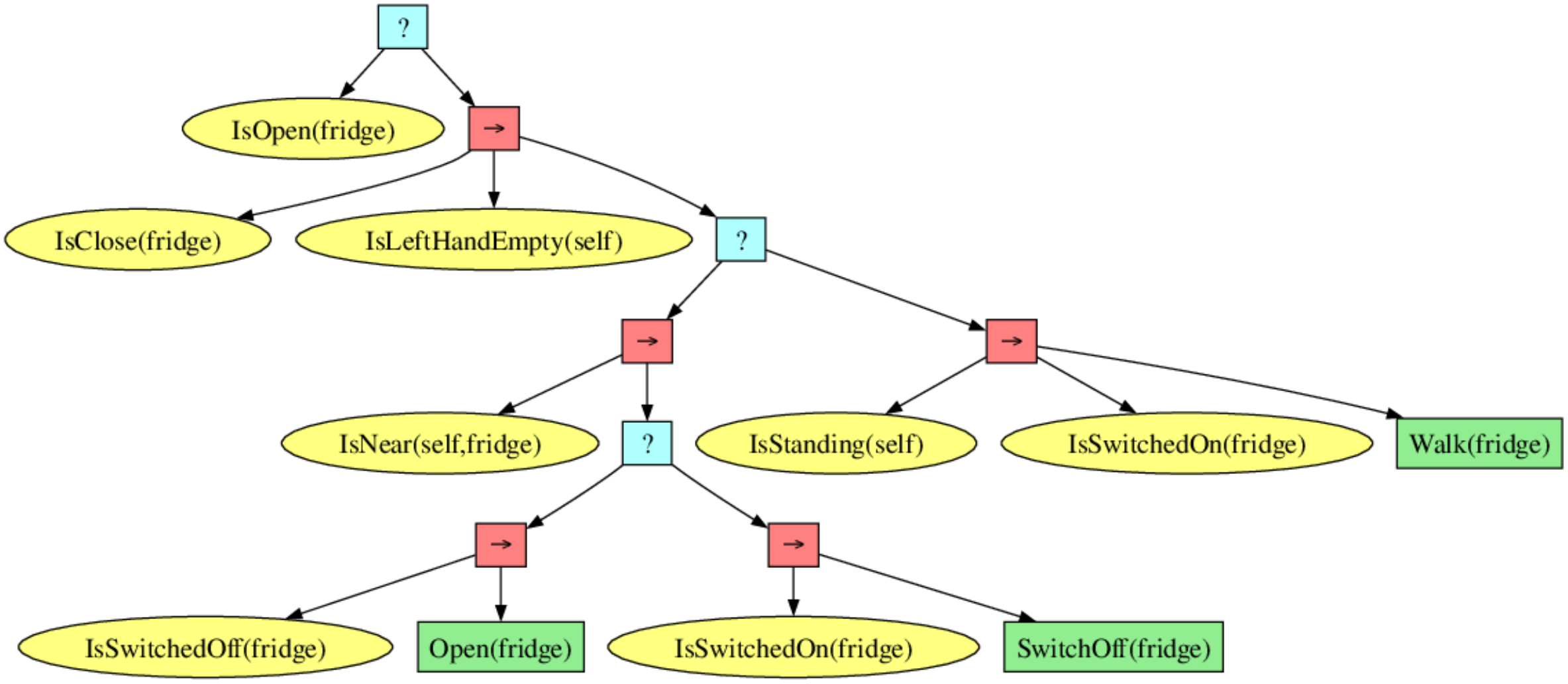}
		\caption{Easy}
		\label{fig:NA}
	\end{subfigure}
	\begin{subfigure}[t]{0.64\textwidth}
		\includegraphics[width=\textwidth]{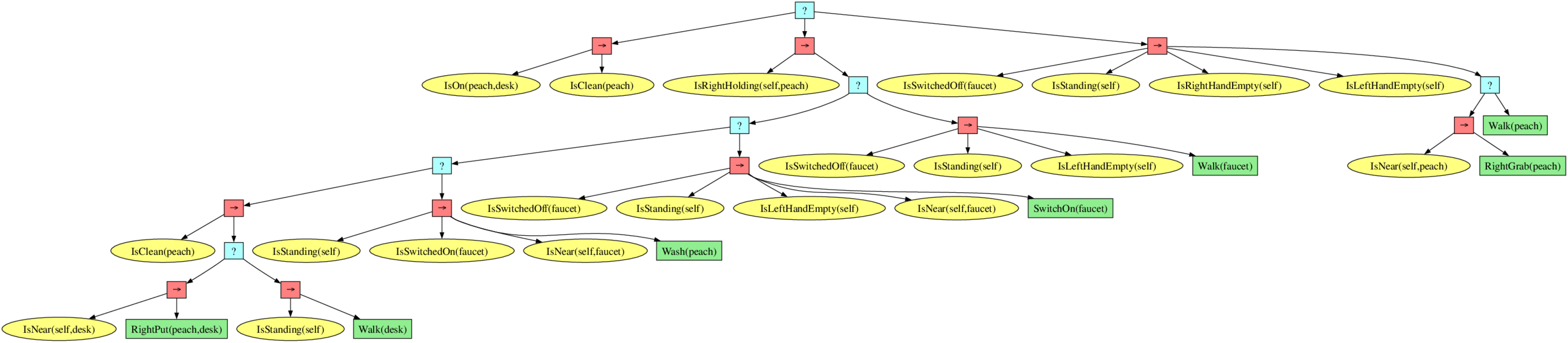}
		\caption{Medium}
		\label{fig:NP}
	\end{subfigure}
	\hfill
	\begin{subfigure}[t]{0.99\textwidth}
		\includegraphics[width=\textwidth]{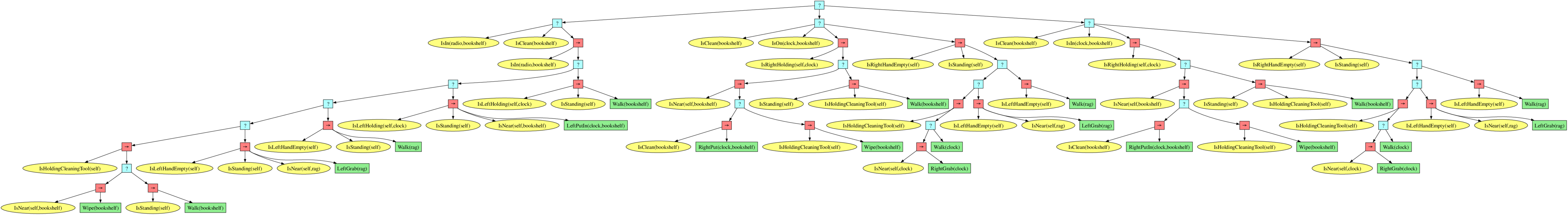}
		\caption{Hard}
		\label{fig:NR}
	\end{subfigure}
	\hfill
	\caption{BT generated based on three difficulty levels of datasets}
	\label{fig:bt}
\end{figure*}

\section{Prompt}

\definecolor{myblue}{RGB}{64,114,196}
\definecolor{mylightblue}{RGB}{243,246,251}
\definecolor{mygreen}{RGB}{38,180,41}
\definecolor{myorange}{RGB}{233,76,5}
\definecolor{blue}{RGB}{27,102,253}
\begin{tcolorbox}[colback=mylightblue,colframe=myblue,title=Prompt, breakable]
    \textbf{[Condition Predicates]} \\
    \textcolor{blue}{IsNear}\_\textcolor{blue}{self}\_\textcolor{myorange}{<ALL>} \\
    \textcolor{blue}{IsOn}\_\textcolor{myorange}{<GRABBABLE>}\_\textcolor{myorange}{<SURFACES>} \\
    ... \\ 
    \textcolor{blue}{IsCut}\_\textcolor{myorange}{<CUTABLE>} \\
    

    \textbf{[Action Predicates]} \\
    \textcolor{blue}{Walk}\_\textcolor{myorange}{<ALL>} \\
    \textcolor{blue}{RightGrab}\_\textcolor{myorange}{<GRABBABLE>} \\
    ... \\
    \textcolor{blue}{Wash}\_\textcolor{myorange}{<WASHABLE>} \\

    
    \textbf{[Objects]} \\
    \textcolor{myorange}{<GRABBABLE>} = [`\textcolor{mygreen}{sundae}', `\textcolor{mygreen}{toothpaste}', ...] \\
    \textcolor{myorange}{<SURFACES>} = [`\textcolor{mygreen}{kitchentable}', `\textcolor{mygreen}{towelrack}', ...] \\
    ... \\
    \textcolor{myorange}{<WASHABLE>} = [`\textcolor{mygreen}{apple}', `\textcolor{mygreen}{bananas}', ...] \\
    \textcolor{myorange}{<ALL>} = \textcolor{myorange}{<GRABBABLE>} + \textcolor{myorange}{<SURFACES>} + \textcolor{myorange}{<CONTAINERS>} + \textcolor{myorange}{<HAS\_SWITCH>} \\ + \textcolor{myorange}{<HAS\_PLUG>} + \textcolor{myorange}{<CUTABLE>} + \textcolor{myorange}{<WASHABLE>} \\

    \textbf{[Few-shot Demonstrations]}\\
    Goals: \textcolor{blue}{IsSwitchedOn}\_\textcolor{myorange}{candle} \\
    Heuristic Path: \textcolor{blue}{Walk}\_\textcolor{myorange}{candle},  \textcolor{blue}{SwitchOn}\_\textcolor{myorange}{candle} \\
    Relevant Action Predicates: \textcolor{blue}{Walk}, \textcolor{blue}{SwitchOn} \\
    Relevant Objects: \textcolor{myorange}{candle} \\
    
    Goals: \textcolor{blue}{IsClean}\_\textcolor{myorange}{peach} \& \textcolor{blue}{IsIn}\_\textcolor{myorange}{peach}\_\textcolor{myorange}{<fridge>} \\
    Heuristic Path: \textcolor{blue}{Walk}\_\textcolor{myorange}{peach}, \textcolor{blue}{RightGrab}\_\textcolor{myorange}{peach}, \textcolor{blue}{Walk}\_\textcolor{myorange}{faucet}, \textcolor{blue}{SwitchOn}\_\textcolor{myorange}{faucet}, \textcolor{blue}{Wash}\_\textcolor{myorange}{peach}, \textcolor{blue}{Walk}\_\textcolor{myorange}{fridge}, \textcolor{blue}{Open}\_\textcolor{myorange}{fridge}, \textcolor{blue}{PlugIn}\_\textcolor{myorange}{fridge}, \textcolor{blue}{RightPutIn}\_\textcolor{myorange}{peach}\_\textcolor{myorange}{fridge} \\
    Relevant Action Predicates: \textcolor{blue}{Walk}, \textcolor{blue}{RightGrab}, \textcolor{blue}{SwitchOn}, \textcolor{blue}{Wash}, \textcolor{blue}{Open}, \textcolor{blue}{PlugIn}, \textcolor{blue}{RightPutIn}  \\
    Relevant Objects: \textcolor{myorange}{peach}, \textcolor{myorange}{faucet}, \textcolor{myorange}{fridge}  \\
    ... \\

    \textbf{[System]} \\ \
    [Condition Predicates] Lists all predicates representing conditions and their optional parameter sets. \\ \
    [Action Predicates] Lists all the actions, specifying their associated costs in parentheses. \\ \
    [Objects] Lists all parameter sets. \\ \
    [Example] Illustrates mappings from goals to Optimal Actions, Relevant Action Predicates, and Relevant Objects, which are essential for completing the tasks outlined in the instructions. \
    \begin{itemize}
        \item Optimal Actions: The sequence of actions with the lowest total cost to achieve the goals. \par
        \item Relevant Action Predicates: Action predicates representing the actions required to achieve the goals. \par
        \item Relevant Objects: Nouns representing all items or entities involved in accomplishing the goals. \par
    \end{itemize}
    \begin{enumerate}
        \item Your task is to analyze the given goal to identify the optimal actions, Relevant action predicates, and Relevant objects necessary for achieving the goals. The goal is presented in first-order logic, consisting of [Condition Predicates]. \par
        \item List all actions needed to accomplish these goals. Begin with `Optimal Actions:' followed by a comma-separated list of actions, using an underscore between the verb and the object. Ensure that the action sequence minimizes the total cost. \par
        \item Identify the essential action predicates used in these actions. These should only be the verbs representing each action. Begin with `Relevant Action Predicates:', followed by a comma-separated list of these verbs. \par
        \item List all crucial objects used in these actions. These should only include the nouns representing items or entities interacted with. Begin with `Relevant Objects:', followed by a comma-separated list of these nouns. \par
        \item The actions, predicates, and objects should come from the provided lists above. If an item does not exist, replace it with the closest available match. \par
        \item Refer directly to the examples in [Few-shot Demonstrations], using `Heuristic Path:', `Relevant Action Predicates:', and `Relevant Objects:' in the given format. Exclude any additional explanations, strictly following the example format without extra headings or line breaks. \par
    \end{enumerate}
    
\end{tcolorbox}

\begin{figure}[H]
	\centering
	\caption{Prompts input into LLMs for reasoning.}
	\label{box:prompt}
\end{figure}

\end{document}